\newtheorem{definition}{Definition}[section]
\newtheorem{theorem}{Theorem}[section]
\newtheorem{proposition}{Proposition}[section]
\newtheorem{lemma}{Lemma}[section]
\newtheorem{regularity assumption}{Regularity assumption}
\newenvironment{proofsketch}[1][Proof Sketch]{%
    \par\noindent\textit{#1.} \rmfamily}{\hfill$\square$\par}
\DeclareMathOperator*{\argmax}{arg\,max}
\DeclareMathOperator*{\argmin}{arg\,min}
\newcommand{\muhat}{\hat{\mu}}
\newcommand{\E}{\mathop{\mathbb{E}}}
\newcommand{\X}{\mathcal{X}}
\newcommand{\pval}{\text{p-value }}
\newcommand{\I}{\mathcal{I}}
\newcommand{\ind}[1]{\mathds{1}\left[#1\right]}
\newcommand{\Hc}{\mathcal{H}}
\newcommand{\dialpha}{\frac{\partial}{\partial \alpha}}
\newcommand{\mutau}{\mu_{\tau}}
\newcommand{\ntau}{n_{\tau}}
\newcommand{\nmu}{n_{\mu_0}}
\newcommand{\Prob}{\text{Pr}}
\newcommand{\da}{d_{\alpha}}
\newcommand{\alphahat}{\widehat{\alpha}}
\newcommand{\N}{\mathcal{N}}
\newcommand{\PassProb}{\text{Pass}}
\newcommand{\fail}{\text{Fail}}
\newcommand{\fnPart}{\text{FN}_{\text{particip}}}
\newcommand{\fnAbs}{\text{FN}_{\text{abstain}}}
\newcommand{\fpPart}{\text{FP}_{\text{particip}}}
\newcommand{\fpAbs}{\text{FP}_{\text{abstain}}}
\newcommand{\squishlist}{
  \begin{itemize}[noitemsep, topsep=0pt, parsep=3pt, partopsep=0pt, leftmargin=1.5em]
}
\newcommand{\squishend}{\end{itemize}}
\newcommand{\squishenum}{
  \begin{enumerate}[noitemsep, topsep=0pt, parsep=3pt, partopsep=0pt, leftmargin=2em]
}
\newcommand{\squishenumend}{\end{enumerate}}
\newif\ifshowshcomments
\newcommand{\kibitz}[2]{\ifnum\Comments=1{\color{#1}{#2}}\fi}
\definecolor{english}{rgb}{0.0, 0.5, 0.0}
\title{Strategic Hypothesis Testing}
\author{%
  Safwan Hossain\thanks{Equal Contribution. Correspondence to \texttt{shossain@g.harvard.edu} or \texttt{yatong.chen@tuebingen.mpg.de}}\\
  Harvard University \and 
  Yatong Chen$^*$ \\
  MPI for Intellgent Systems \and
  Yiling Chen \\
  Harvard University
}
\date{}
\begin{document}

\maketitle

\begin{abstract}

We examine hypothesis testing within a principal-agent framework, where a strategic agent, holding private beliefs about the effectiveness of a product, submits data to a principal who decides on approval. The principal employs a hypothesis testing rule, aiming to pick a p-value threshold that balances false positives and false negatives while anticipating the agent’s incentive to maximize expected profitability. Building on prior work, we develop a game-theoretic model that captures how the agent’s participation and reporting behavior respond to the principal’s statistical decision rule. Despite the complexity of the interaction, we show that the principal's errors exhibit clear monotonic behavior when segmented by an efficiently computable critical p-value threshold, leading to an interpretable characterization of their optimal p-value threshold. We empirically validate our model and these insights using publicly available data on drug approvals. Overall, our work offers a comprehensive perspective on strategic interactions within the hypothesis testing framework, providing technical and regulatory insights.
\end{abstract}

\section{Introduction}

In data-driven decision making, the outcome assigned to an agent often depends on data submitted by them, creating the potential for misaligned incentives. The role of strategic behavior in decision-making systems has thus become a burgeoning area of research in the machine learning community, from classification~\cite{hardt2016strategic, dong2018strategic}, to regression~\cite{chen2018strategyproof, chen2020linear, hossain2021effect} and beyond~\cite{harris2023strategic}. However, limited literature exists on this perspective for a widely used and influential process in regulatory and scientific settings: hypothesis testing. Widely used in clinical trials, scientific research, and technological innovations~\cite{FDA2019Effectiveness, wansink2018food, brodeur2023unpacking, stefan2023big}, hypothesis testing serves as a foundational method for assessing whether the evidence provided by a participating agent is both statistically significant and convincing. Consider, for instance, the U.S. Food and Drug Administration (FDA), which oversees drug approvals by setting a p-value threshold $\alpha$ for submitted clinical trials. Pharmaceutical firms incur substantial costs by participating in drug development and running trials in the hopes of being approved. Although falsifying results has high reputational and legal risks, firms are free to decide \emph{if} to participate and \emph{how large} to run a trial. We argue these decisions are intimately shaped by the FDA's p-value threshold and the cost-benefit calculus of the overall process. More generally, we propose a principal-agent model to understand these nuanced strategic decisions surrounding participation and evidence collection for a hypothesis test. 

\citet{bates2023incentive}  initiated this literature by casting how agents respond to regulatory approval thresholds as a Stackelberg game. We extend this framework to capture additional real-world complexities that are abstracted away in their model. Specifically, we extend the analysis beyond binary notions of effectiveness and fixed trial costs, allowing agents to strategically choose their trial size based on the expected benefit relative to marginal per-sample cost. This extension not only enriches the analysis of agent incentives but also provides a more nuanced understanding of how regulatory policies can influence participation and decision outcomes. This richer model, however, leads to a more complex relationship between the regulator-specified p-value and the resulting error rates that they wish to control -- for instance, if marginal costs increase faster than the expected revenue gains, raising $\alpha$ could paradoxically reduce the likelihood of passing the hypothesis test. This is further complicated by the principal wishing to control any combination of Type I (false positive) and Type II (false negative) errors within our model. Indeed, while the broader literature focuses on only the former~\cite{bates2023incentive, tetenov2016economic, shi2024sharpresultshypothesistesting, bates2022principalagent}, we argue that minimizing the rejection or non-participation of effective products is also consequential to decision-makers, especially when this is influenced by low revenue relative to costs, as is the case for low-cost and orphan drugs \cite{FDA2022MultipleEndpoints}. We outline our key contributions toward addressing these challenges below: 

\squishenum
  \item \textbf{Game-Theoretic Framework:} Like~\citet{bates2023incentive}, we consider a principal-agent (Stackelberg game) framework but relax several key assumptions to adhere more closely to real-world settings. We allow agents to strategically choose their sample sizes as part of their best responses and account for the marginal cost of such samples. Further, we model the principal as aiming to minimize any combination of Type I and II errors.
  \item \textbf{Analysis of Component-Wise Losses:} 
    We rigorously analyze the principal's loss components as functions of the p-value threshold under agent best-responding behavior. Despite the complex interplay between several factors, the monotonicity of Type I and Type II errors as a function of the p-value threshold $\alpha$ is preserved in a \emph{piecewise manner}. Specifically, there exists a critical threshold $\hat{\alpha}$ such that the error dynamics are monotonic within each segmented region: one for $\alpha < \hat{\alpha}$ and one for $\alpha \geq \hat{\alpha}$.

    \item \textbf{Empirical Validation and Policy Implications:} We validate our theoretical findings through an empirical analysis of drug approvals for three major drug categories and show that under our model, the commonly used p-value of $0.05$ aligns well with median revenue thresholds in these markets. The results suggest our model captures the economic dynamics of this process and suggests new policy insights. 
\squishenumend


\paragraph{Additional Related Works:} Our work is closely related to the literature on economic aspects of statistical testing, p-hacking, contract theory, and FDA regulatory policies. A growing body of research examines the economic incentives in statistical decision-making~\cite{tetenov2016economic, bates2022principalagent, bhatt2024imperative, shi2024sharpresultshypothesistesting, mcclellan2022experimentation}. 
\citet{shi2024sharpresultshypothesistesting} extends the work of \citet{bates2023incentive} (which is discussed above) by studying strategic hypothesis testing under general concave utility functions, providing bounds on the Bayes False Discovery Rate (FDR). 
Relatedly, \citet{bates2022principalagent} models the agent-principal interaction through a contract-theoretic lens for incentive alignment, differing from our Stackelberg game approach. 
Also related is the well-known issue of \emph{p-hacking}, where researchers manipulate sample sizes or selectively report findings to artificially achieve statistical significance \cite{head2015extent, brodeur2023unpacking, stefan2023big, moss2021modelling, wansink2018food}. 
Note that the strategic behavior in our setting is not inherently malicious, but rather reflects rational decision-making based on cost, revenue, and expected trial outcomes. A good p-value threshold discourages agents with ineffective drugs from participating based on economic incentives: the costs of participation outweigh the potential benefits. This disincentives p-hacking insofar as data collection is costly. Our work also intersects with the literature on \emph{contract theory}, which examines incentive alignment in the presence of private information \cite{hart1987theory, laffont2002theory, salanie2005economics,frongillo2023recent, danz2020belief}. Recent work by \citet{min2023screening} applies contract theory to model FDA approval processes, analyzing how firms of different sizes choose between cheaper and more expensive trials. Further, \citet{isakov2019fda} employs Bayesian Decision Analysis (BDA) to optimize p-value thresholds by balancing Type I and Type II errors. Our empirical results leverage the statistical testing-based framework used by the FDA. Typically, this is established through either two successful controlled trials (with a p-value < 0.05) or a single robust multicenter trial (with a p-value < 0.005), as outlined in \cite{FDA2019Effectiveness}. We simplify this in our analysis by only considering the former.

\section{Model}

\paragraph{Preliminaries:} Consider a \emph{principal} who must decide between approving or rejecting a product manufactured by an \emph{agent} based on evidence provided by the agent. Let $X \in \{0,1\}$ be a Bernoulli random variable indicating whether the product was effective on a random instance. Let $\mu_0 = \E[X]$ denote the mean effectiveness of this product, with $\mu_0 \sim q$. While $\mu_0$ is private (known only by the agent), the distribution $q$ is assumed public (known to both the principal and agent). Let $\mu_b$ denote a baseline effectiveness (i.e. effectiveness of current products) and be treated as a constant: the principal's goal is to only approve products perceived to be at least as good as the baseline $(\mu_0 \geq \mu_b)$, with $(\mu_0-\mu_b)$ denoted as \emph{effect size}. An agent faces two possible decisions for their product. First, they can choose whether or not to participate in the approval process and engage with the principal. Not participating, equivalent to $n=0$, incurs no cost and collects no revenue. If they participate, they must also decide on the number of samples $n \in [n_{min}, n_{max}]$ to collect and submit an \emph{evidence set} $\X_n = (X_1, \dots, X_n)$ to the principal. This incurs a fixed cost $c_0$ and a marginal per-sample cost $c$. Thus, $\text{cost}(n) = \ind{n \ne 0}(c_0 + cn)$. If their product is approved, they earn revenue $R$. Agents are assumed rational and act to maximize their expected utility -- expected revenue minus cost (see Definition \ref{defn:utilty}). We assume the revenue and cost parameters are known to the principal.

Connecting this to our running example, the FDA (the principal) decides on the approval of new drugs from pharmaceutical companies (the agents) based on whether the clinical trial data suggests them to be at least as effective as current alternatives on the market. If a firm follows through on the development of a new drug and participates in clinical trials, $c_0$ denotes any fixed costs herein, and $c$, the per-subject marginal cost of the trial. An FDA approval means a lifetime revenue $R$ for their new drug, while rejection means no revenue. As such, the firm only participates if its expected profit (utility) is non-negative, and chooses the samples $n$ to maximize this.
\paragraph{Hypothesis Testing:} The agent's decision clearly hinges on \emph{how} the principal uses the evidence set to evaluate effectiveness. We model this process on hypothesis testing given its ubiquity and relevance in settings like drug approval, manufacturing, public policy, and so on \cite{FDA2019Effectiveness, wansink2018food,brodeur2023unpacking,stefan2023big}. Formally, let $\Hc_0 = \{\mu_0 \leq \mu_b\}$ be the \emph{null hypothesis} (the product is less effective than baseline) and denote the alternative as $\Hc_1 = \{\mu_0 > \mu_b\}$. The principal uses a $p$-value threshold $\alpha$ to reject the null hypothesis and approve the product. To expand, given an evidence set $\X_n$, let $\muhat$ denote the empirical mean. Further, let $S_n$ denote the random sum of $n$ variables sampled from the baseline process with effectiveness $\mu_b$. Then the $p$-value is defined as the probability of observing outcomes at least as good as the evidence, conditioned on the null hypothesis: $p(\mu_0, n) = \Prob[S_n \geq n\muhat | \Hc_0]$ \footnote{While $p$ value calculations depends on $\mu_b$, we drop this from function signatures since it is a constant.}. Observe that when the empirical mean of the evidence is worse than the baseline, the $p$-value will likely be higher than $\alpha$, leading the principal to reject; otherwise, when $p(\mu_0, n) \leq \alpha$, the principal will accept. For an evidence set with $n$ samples and a \pval $\alpha$, the \emph{critical region} is the number of successes needed to reject the null-hypothesis and thus be approved. Since the sample outcomes are Bernoulli and the sum random variable follows a Binomial distribution, it is common to use a normal distribution $\phi$ to approximate the $p$-value and critical region:
\begin{equation*}
     z_{\alpha, n} \approx \left\{k \in \mathbb{R} \bigg| \int_{k}^{\infty}{\phi(n\mu_b, n\mu_b(1-\mu_b))} \leq \alpha \right\} \approx n\mu_b + \Phi^{-1}(1-\alpha)\sqrt{n\mu_b (1 - \mu_b)}
\end{equation*}
where $\Phi$ is the CDF of the standard normal and $\Phi^{-1}$, its inverse. Formalizing the principal's decision-making process means that we can now compile the probability that a product with effectiveness $\mu_0$ is approved (the randomness is over the evidence set), and thereby the agent's utility:

\begin{definition}[Pass Probability]\label{defn:pass_prob}
    For a \pval~threshold $\alpha$ and baseline effectiveness $\mu_b$, a product with effectiveness $\mu_0$ and evidence set $|\X_n| = n$ is approved with probability: \begin{equation}
        \PassProb(\alpha, \mu_0, n) \triangleq \Pr[p(\mu_0, n) \leq \alpha] \approx 1 - \Phi\left(\frac{z_{\alpha, n} - n\mu_0}{\sqrt{n \mu_0(1 - \mu_0)}}\right)
    \end{equation}
    Non-participation is considered equivalent to $n=0$, and $\PassProb(\alpha, \mu_0, n=0) \triangleq 0, \,\, \forall \, \mu_0, \alpha$.
\end{definition}

\begin{definition}[Agent Utility]\label{defn:utilty}
    An agent with revenue $R$, cost parameters $(c_0, c)$, and a product with effectiveness $\mu_0 \sim q$ has the following utility for their participation/sample parameter $n$:
    \begin{equation}\label{eq:agent_utility}
        u(\alpha,\mu_0,n) = R \cdot\PassProb(\alpha, \mu_0, n) - \ind{n \ne 0}(c_0 + cn)
    \end{equation}
\end{definition}

While the agent aims to maximize their utility, the principal's goal is to choose a $\pval$ threshold that minimizes some combination of their Type I and Type II errors, a standard desideratum in hypothesis testing. In machine learning terminology, these are equivalent to minimizing the false positive (approving ineffective products) and false negative (not approving effective products) rates. These error components are defined with respect to the effectiveness distribution $q$, and we note that the principal may desire an arbitrary trade-off between the two error components. This principal loss and its objective are shortly defined in Definition~\ref{defn:Stackelberg_game}.

\paragraph{Game Theoretic Model:} It is evident that agents will choose $n$ strategically to maximize their utility, while the principal selects a p-value threshold $\alpha$ to minimize a \emph{loss} function, which we will define later. This naturally leads to a game-theoretic framework. In most regulatory settings (drug approval, manufacturing, etc), the principal must first communicate the acceptance criteria to all possible participants. Agents, on the other hand, can make their decision to participate and collect samples based on the revealed criteria. This interaction outlines a \emph{Stackelberg Game}, an asymmetric model of strategic decision-making: the principal first commits to the \pval threshold, allowing agents to then make their optimal decision/best-response (participation decisions and number of samples) thereafter\footnote{Technically, our setting is a Bayesian Stackelberg Game since agents have hidden types (the effectiveness $\mu_0$), but the principal knows the distribution $q$.}. In such games, the core solution concept is the Stackelberg Equilibrium -- the optimal principal strategy, \emph{given} that downstream agents will best respond. We formally define the game and its details below:

\begin{definition}[Stackelberg Game in Strategic Hypothesis Testing]\label{defn:Stackelberg_game}
    The principal-agent interactions in a hypothesis testing setting outline a Stackelberg game defined by the tuple $\I =(q, R, c, c_0, \mu_b)$. The best-response of an agent with effectiveness $\mu_0 \sim q$ when the principal commits to a \pval~threshold $\alpha$ is: $\nmu(\alpha) = \argmax_{n}{u(\alpha, \mu_0, n)}$. 

    The principal in choosing a $p$ value threshold $\alpha$ suffers the following loss when agents best respond (we denote $\fail(\cdot) = 1 - \PassProb(\cdot)$ and $\lambda_{fp}, \lambda_{fn}$ are constants that scale the respective loss terms):
    \begin{align*}\label{eqn:principal-obj-fn}
    \mathcal{L}(\alpha, I) &= \lambda_{fp} \underbrace{\E_{\mu_0 \sim q}[\PassProb(\alpha, \mu_0, \nmu(\alpha))| \mu_0 < \mu_b]}_{\substack{\text{False Positive (Type I error) $\triangleq \text{FP}(\alpha, q)$} \\ \text{Approval of ineffective products } (\mu_0 < \mu_b)}} 
    &\quad + \lambda_{fn} \underbrace{\E_{\mu_0 \sim q}[\fail(\alpha, \mu_0, \nmu(\alpha))| \mu_0 \geq \mu_b]}_{\substack{\text{False Negative (Type II error) $\triangleq \text{FN}(\alpha, q)$} \\ \text{Non-approval of effective products} (\mu_0 \geq \mu_b)}}
    \end{align*}
    The principal optimal strategy and the Stackelberg Equilibrium is to choose $\alpha^\star = \argmin_{\alpha}{\mathcal{L}(\alpha, \I)}$.
\end{definition}
In Section \ref{sec:agent-best-response} and Section \ref{sec:principal-loss}, we will investigate the agent's best response behavior as well as how the principal chooses \pval threshold affects the loss defined in \Cref{defn:Stackelberg_game}.

\section{Agent Best Response}
\label{sec:agent-best-response}

According to our strategic model, after the principal releases a a $p$-value threshold $\alpha$, the utility-maximizing agent must decide (1) if they want to develop the product and participate in the approval process, and if so, (2) how many samples they ought to include in their submitted evidence set. Notice that this decision-making process is ex-ante and relies on the agent's belief in their product's effectiveness $\mu_0$. Intuitively, the agent intends to determine the optimal number of samples $n$ to maximize their ex-ante expected utility; if this is negative, they have no incentive to participate. Mathematically, participation is defined by $\ind{u(\alpha, \mu_0, n_{\mu_0}(\alpha)) \geq 0}$, where $u(\alpha, \mu_0, n)$ is defined in \Cref{defn:utilty} and $n_{\mu_0}(\alpha)$ is the optimal number of samples. We now show that this can be efficiently computed by the agent. We sketch the proof below, with the full proof in Appendix \ref{appendix:best_response}

\begin{theorem}\label{thrm:best_response}
    For an instance $\I$ and a released $p$-value $\alpha$, an agent with effectiveness $\mu_0$ can compute their best-response (participation decision and number of samples) in $O(\log{n_{max}})$.
\end{theorem}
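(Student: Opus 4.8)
Here is the plan. For fixed instance $\I$ and threshold $\alpha$, I would show that the agent's utility $u(\alpha,\mu_0,\cdot)$, viewed over trial sizes $n\in\{0\}\cup[n_{min},n_{max}]$, has a landscape with only a constant number of monotone/unimodal pieces; the maximizer over participating sizes can then be located with $O(1)$ ternary searches, each costing $O(\log n_{max})$, and comparing the resulting best utility to $u(\alpha,\mu_0,0)=0$ settles participation. The first step is to simplify $\PassProb$: plugging the normal approximation for $z_{\alpha,n}$ into Definition~\ref{defn:pass_prob} and substituting $t=\sqrt{n}$, the $\sqrt{n}$ factors cancel and the argument of $\Phi$ becomes \emph{affine} in $t$. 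Writing $\sigma_0=\sqrt{\mu_0(1-\mu_0)}$, $\sigma_b=\sqrt{\mu_b(1-\mu_b)}$, $z=\Phi^{-1}(1-\alpha)$, one gets
\[
\PassProb(\alpha,\mu_0,n)=\Phi(at+b),\qquad a=\frac{\mu_0-\mu_b}{\sigma_0},\quad b=-\frac{z\sigma_b}{\sigma_0},
\]
so $u(\alpha,\mu_0,n)=R\,\Phi(at+b)-c_0-ct^2$ for $n\neq 0$. Since $n\mapsto\sqrt n$ is monotone, any unimodality established in $t$ transfers to $n$, hence to the integer grid.

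Next I would split on the sign of the effect size. If $\mu_0\le\mu_b$ (so $a\le 0$), then $u'(t)=Ra\,\phi(at+b)-2ct<0$ for $t>0$, so $u$ is strictly decreasing and the only candidate participating size is $n=n_{min}$. If $\mu_0>\mu_b$ (so $a>0$), I would study the curvature $u''(t)=-Ra^2\,(at+b)\,\phi(at+b)-2c$. The key fact is that $s\mapsto -s\phi(s)$ is bounded (by $\phi(1)$) and exceeds any fixed positive level only on a single bounded subinterval of $(-\infty,0)$; since $t\mapsto at+b$ is monotone, the set $\{t:u''(t)\ge 0\}$ is therefore a single interval (possibly empty) with endpoints available in closed form. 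Hence on $[\sqrt{n_{min}},\sqrt{n_{max}}]$ the function $u$ is concave, or concave–convex, or concave–convex–concave; equivalently $u'$ has at most three sign changes. (One can see the same bound directly: $u'(t)=0$ rearranges to the comparison of the ``decreasing–increasing–decreasing'' function $-\tfrac12(at+b)^2-\log t$ with a constant.)

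Finally I would optimize piecewise: on each concave segment $u$ restricted to the integer trial sizes in that segment is unimodal, so a ternary search finds its maximum in $O(\log n_{max})$; on the at most one convex segment, the maximum over that range is attained at one of its two endpoints. Taking the best of these $O(1)$ candidates yields $n^\star=\argmax_n u(\alpha,\mu_0,n)$ over participating sizes, and the agent participates with $n^\star$ samples iff $u(\alpha,\mu_0,n^\star)\ge 0$, else $n_{\mu_0}(\alpha)=0$. The total cost is $O(1)$ closed-form evaluations plus $O(1)$ ternary searches, i.e.\ $O(\log n_{max})$.

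The main obstacle is the curvature step. Unlike the binary-effectiveness/fixed-cost setting, $u(\alpha,\mu_0,\cdot)$ need not be quasi-concave here: when the per-sample cost is small relative to $R(\mu_0-\mu_b)^2$, it can exhibit two local maxima separated by a convex ``dip'' at small $n$, which is exactly why a single ternary search does not suffice. The argument must therefore establish that there is at most one such dip and that its location is computable in closed form — which is precisely what the boundedness and the single-peak shape of $s\mapsto -s\phi(s)$ provide, after the $t=\sqrt n$ reparametrization makes the argument of $\Phi$ affine.
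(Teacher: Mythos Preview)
Your approach is essentially the paper's: split on the sign of $\mu_0-\mu_b$ (ineffective agents take $n_{min}$), and for $\mu_0>\mu_b$ perform a second-derivative analysis to carve the domain into at most three concave/convex pieces, then search each in $O(\log n_{max})$ (the paper uses binary search on the derivative where you use ternary search on the function, but this is cosmetic). The one inaccuracy is your claim that the convex-region endpoints are ``available in closed form'': in your $t$-parametrization, $u''(t)=0$ reads $-(at+b)\,\phi(at+b)=2c/(Ra^2)$, which is transcendental. The paper sidesteps this by taking both derivatives with respect to $n$ rather than $t$; the linear cost $-cn$ then vanishes after the second differentiation, the factor $\phi(\cdot)$ pulls out cleanly, and the sign of $\partial^2 u/\partial n^2$ is governed by a genuine quadratic in $\sqrt{n}$ whose roots come from the quadratic formula. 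Your overall bound still holds --- you can locate the sign changes of your $u''(t)$ by bisection on each side of its closed-form peak at $t=(-1-b)/a$, costing an extra $O(\log n_{max})$ --- but differentiating in $n$ is the cleaner route here.
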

\begin{proofsketch}
    By leveraging properties of the normal CDF that define the pass probability, we first show that when $\mu_0 < \mu_b$, the optimal $n=n_{min}$. For $\mu_0 \geq \mu_b$, we undertake a first and second-order analysis to partition the utility function into a constant number of intervals wherein it is convex or concave; the boundaries of these regions can be computed in constant time, and the optimal $n$ within each region requires at-most a binary search. Once the optimal $n$ is computed, it suffices to compute the passing probability and participate if the corresponding expected utility is non-negative.
\end{proofsketch}

We now prove that despite the agent being strategic and their participation and sampling decisions dynamically changing in both $\alpha$ and $\mu_0$, several natural and intuitive results still hold. These are instrumental to the analyses of the principal's optimal/equilibrium \pval threshold. We first show that the participation behaviour is monotonic in $\alpha$ (Lemma \ref{lemma:monotonicity-participation-decision}). The full proof is in Appendix \ref{appendix:best_response}.

\begin{lemma}
\label{lemma:monotonicity-participation-decision}
For an instance $\mathcal{I} = (q, R, c, c_0, \mu_b)$ and p-value threshold $\alpha$, if an agent with effectiveness $\mu_0$ participates in the statistical test, then any agent with belief $\mu_1 \geq \mu_0$ will also participate. Similarly, if an agent with belief $\mu'_0$ does not participate, neither will one with belief $\mu'_1 < \mu'_0$. 
\end{lemma}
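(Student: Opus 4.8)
The plan is to reduce everything to one monotonicity fact about the pass probability: for every fixed threshold $\alpha$ and every feasible sample size $n$, the map $\mu_0 \mapsto \PassProb(\alpha,\mu_0,n)$ is non-decreasing. Granting this, the lemma is short. The cost term $\ind{n\ne 0}(c_0+cn)$ in Definition~\ref{defn:utilty} does not involve $\mu_0$, so for $\mu_1\ge\mu_0$ we get $u(\alpha,\mu_1,n)\ge u(\alpha,\mu_0,n)$ for \emph{every} $n$. Letting $n^\star=\nmu(\alpha)$ be an optimal sample size for the $\mu_0$-agent,
\[
  u(\alpha,\mu_1,n_{\mu_1}(\alpha)) \;\ge\; u(\alpha,\mu_1,n^\star) \;\ge\; u(\alpha,\mu_0,n^\star) \;=\; u(\alpha,\mu_0,\nmu(\alpha)) \;\ge\; 0 ,
\]
the last inequality being precisely the hypothesis that the $\mu_0$-agent participates; hence the $\mu_1$-agent participates too. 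The second assertion is the contrapositive of the first: if $\mu_1'<\mu_0'$ participated, applying the first part with the lower type $\mu_1'$ and the higher type $\mu_0'$ would force $\mu_0'$ to participate, contradicting the assumption.

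So the crux is monotonicity of $\PassProb(\alpha,\mu_0,n)$ in $\mu_0$. I would argue it exactly, bypassing the normal approximation: the agent's reported number of successes is distributed as $\mathrm{Bin}(n,\mu_0)$, and approval occurs exactly when this count falls in the critical region, an upper set of $\{0,1,\dots,n\}$. For $\mu_1\ge\mu_0$ the coupling $X_i=\ind{U_i\le\mu_0}\le\ind{U_i\le\mu_1}$ with shared uniforms $U_i$ shows $\mathrm{Bin}(n,\mu_1)$ stochastically dominates $\mathrm{Bin}(n,\mu_0)$, so the probability of landing in any upper set — in particular the critical region — is non-decreasing in $\mu_0$. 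For $n=0$ the pass probability is identically $0$ by definition, so monotonicity is trivial there.

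If one prefers to work with the closed-form normal approximation of Definition~\ref{defn:pass_prob}, it suffices to show the argument $g(\mu_0)=\big(z_{\alpha,n}-n\mu_0\big)\big/\sqrt{n\mu_0(1-\mu_0)}$ of $\Phi$ is non-increasing in $\mu_0$, since $\PassProb=1-\Phi(g)$ and $\Phi$ is increasing. A direct differentiation, after clearing positive factors, gives $\operatorname{sign}\!\big(g'(\mu_0)\big)=-\operatorname{sign}\!\big((n-2z_{\alpha,n})\mu_0+z_{\alpha,n}\big)$; the bracketed expression is affine in $\mu_0$, equals $z_{\alpha,n}>0$ at $\mu_0=0$ and $n-z_{\alpha,n}$ at $\mu_0=1$, hence stays non-negative on $[0,1]$ whenever $z_{\alpha,n}\le n$ (i.e.\ the test is not vacuous, so that even all-successes evidence can in principle pass). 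This is the step that needs the most care, since the denominator $\sqrt{n\mu_0(1-\mu_0)}$ is itself non-monotone in $\mu_0$ and termwise reasoning fails; the short derivative computation resolves it, and the exact-binomial argument sidesteps the issue entirely.
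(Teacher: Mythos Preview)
Your proof is correct and follows essentially the same route as the paper: both reduce to monotonicity of $\PassProb(\alpha,\mu_0,n)$ in $\mu_0$ at a fixed $n$ via the exact binomial tail, then compare utilities at the lower type's sample size to force participation of the higher type. Your treatment is a bit more explicit---you spell out the coupling for stochastic dominance and the chain of inequalities, and you add an optional normal-approximation derivative argument---while the paper simply asserts the binomial-tail inequality ``follows immediately since $\mu_1\ge\mu_0$'' and argues the second direction directly rather than by contraposition, but these are cosmetic differences.
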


This monotonicity result immediately implies the existence of a \emph{participation threshold} $\mutau(\alpha)$ for any given \pval $\alpha$. That is, for any instance $\I$ and p-value $\alpha$, agents with effectiveness $\mu_0 \geq \mutau(\alpha)$ will always participate, and those below will not. As we subsequently show, the participation threshold is a crucial concept in understanding the principal's decision -- it determines the selection mechanism induced by the \pval~threshold $\alpha$, shaping the set of participating agents based on their effectiveness. Consequently, understanding how $\mutau(\alpha)$ changes as a function of the instance parameters $R, c, \alpha$ provides key insights into optimizing the principal's objective function. We now show that the participation threshold \emph{decreases} as the p-value threshold $\alpha$ \emph{increases}; further, it can be computed in log time and is agnostic to the effectiveness distribution $q$ (full proof in Appendix \ref{appendix:best_response}). 

\begin{definition}[Participation Threshold]\label{defn:participation_threshold}
    For an instance $\I$ and a \pval $\alpha$, we denote $\mutau(\alpha)$ as the \emph{participation threshold} if it is optimal for agents with effectiveness $\mu_0 \geq \mutau(\alpha)$ to participate, and for those with $\mu_0 < \mutau(\alpha)$ to abstain.
\end{definition}

\begin{lemma}
\label{lemma:participation-threshold}
    The participation threshold $\mutau(\alpha)$ decreases in the \pval $\alpha$. Further, this threshold can be computed with $\varepsilon$ precision in  $\mathcal{O}\left(\log\left(\tfrac{1}{\varepsilon}\right) \log(n_{max})\right)$. 
\end{lemma}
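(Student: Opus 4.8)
## Proof Plan for Lemma \ref{lemma:participation-threshold}

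The plan is to establish the two claims — monotonicity of $\mutau(\alpha)$ in $\alpha$, and its efficient computability — essentially independently. For monotonicity, I would argue from the definition of the participation threshold together with Lemma \ref{lemma:monotonicity-participation-decision}. Fix two thresholds $\alpha_1 < \alpha_2$. The key observation is that for any fixed effectiveness $\mu_0$ and any fixed sample count $n$, the pass probability $\PassProb(\alpha, \mu_0, n)$ is nondecreasing in $\alpha$: this follows directly from Definition \ref{defn:pass_prob}, since $z_{\alpha,n} = n\mu_b + \Phi^{-1}(1-\alpha)\sqrt{n\mu_b(1-\mu_b)}$ is decreasing in $\alpha$ (as $\Phi^{-1}$ is increasing and $1-\alpha$ decreasing), so the argument of the outer $\Phi$ in $\PassProb$ decreases, making $\PassProb$ increase. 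Consequently, taking the max over $n$, the optimal utility $u(\alpha, \mu_0, n_{\mu_0}(\alpha))$ is nondecreasing in $\alpha$ for every $\mu_0$ (any $n$ that was feasible before is still available, and now yields weakly higher utility). Therefore, if an agent with effectiveness $\mu_0$ participates at $\alpha_1$ — i.e., has nonnegative optimal utility there — it also participates at $\alpha_2$. Hence the set of participating agents grows (weakly) as $\alpha$ increases, and since by Lemma \ref{lemma:monotonicity-participation-decision} this set is always an upward-closed interval $[\mutau(\alpha), 1]$, its left endpoint $\mutau(\alpha)$ can only decrease. The fact that $\mutau(\alpha)$ does not depend on $q$ is immediate: the best-response computation and the participation indicator depend only on $R, c, c_0, \mu_b, \alpha$ and the agent's own $\mu_0$, never on the prior $q$.

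For the computability claim, I would combine the monotonicity just established (in $\mu_0$, from Lemma \ref{lemma:monotonicity-participation-decision}) with a binary search over the effectiveness value. Define $g(\mu_0) \triangleq u(\alpha, \mu_0, n_{\mu_0}(\alpha)) = \max_n u(\alpha, \mu_0, n)$, the optimal attainable utility; Lemma \ref{lemma:monotonicity-participation-decision} (or more precisely the proof technique behind it, which shows $g$ is nondecreasing in $\mu_0$) tells us $g$ is monotone nondecreasing, and $\mutau(\alpha)$ is exactly the smallest $\mu_0$ with $g(\mu_0) \geq 0$ (with the usual boundary conventions if $g$ is nonnegative everywhere or negative everywhere on the feasible range). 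A binary search on the interval of possible effectiveness values therefore locates $\mutau(\alpha)$ to precision $\eps$ in $O(\log(1/\eps))$ iterations. Each iteration requires one evaluation of $g(\mu_0)$, i.e., solving the agent's best-response problem for a single candidate $\mu_0$, which by Theorem \ref{thrm:best_response} costs $O(\log n_{max})$. Multiplying the iteration count by the per-iteration cost yields the stated $O\!\left(\log(1/\eps)\log(n_{max})\right)$ bound.

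I expect the main obstacle to be the monotonicity argument at its boundary cases rather than its core. Two subtleties need care. First, when $\mu_0 < \mu_b$, Theorem \ref{thrm:best_response}'s proof sketch says the optimal $n$ is $n_{min}$, and the pass probability there can still be positive (it is a one-sided test, so even a below-baseline product passes with some probability), so I must confirm the indicator-based participation condition still behaves monotonically across the $\mu_b$ boundary — this is exactly what Lemma \ref{lemma:monotonicity-participation-decision} already guarantees, so I would simply invoke it rather than re-derive it. Second, for the "decreases" claim to be stated cleanly I should be careful about whether it means strictly or weakly decreasing; given that the pass probability is strictly increasing in $\alpha$ wherever it is interior to $(0,1)$, $\mutau(\alpha)$ is strictly decreasing on the range of $\alpha$ where the threshold is interior, and I would phrase the lemma's conclusion as weak monotonicity to be safe, noting strictness holds generically. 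The rest — the binary-search correctness and the complexity bookkeeping — is routine once Theorem \ref{thrm:best_response} is in hand.
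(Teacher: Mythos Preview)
Your proposal is correct and follows essentially the same approach as the paper: both arguments establish monotonicity by observing that $\PassProb(\alpha,\mu_0,n)$ is nondecreasing in $\alpha$ for fixed $(\mu_0,n)$, hence the threshold agent at $\alpha_1$ still participates at $\alpha_2>\alpha_1$, and then invoke Lemma~\ref{lemma:monotonicity-participation-decision} to conclude $\mutau(\alpha_2)\le\mutau(\alpha_1)$; and both obtain the complexity bound by binary-searching over effectiveness (using Lemma~\ref{lemma:monotonicity-participation-decision}) with each probe costing $O(\log n_{max})$ via Theorem~\ref{thrm:best_response}. Your framing via the monotonicity of the max-utility function $g(\mu_0)$ is a slightly cleaner packaging of the same idea, and the boundary concerns you flag are already handled by the lemmas you cite.
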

\begin{proofsketch}
    Since the passing probability of the hypothesis test increases with $\alpha$ for a fixed effect size, agents with lower beliefs are more likely to meet the passing condition as $\alpha$ increases. By the monotonicity of participation (\Cref{lemma:monotonicity-participation-decision}), if an agent participates at $\mutau(\alpha_1)$ under a p-value $\alpha_2 > \alpha_1$, then agents with beliefs greater than $\mutau(\alpha_1)$ must also participate. This implies that the participation threshold under $\alpha_2$ must be at or below the threshold under $\alpha_1$, establishing that $\mutau(\alpha)$ is non-increasing in $\alpha$. To determine $\mutau(\alpha)$, we can discretize the effectiveness space into $\frac{1}{\varepsilon}$ intervals and run binary search. The algorithm and correctness rely on~\Cref{lemma:monotonicity-participation-decision} and Theorem~\ref{thrm:best_response}.
\end{proofsketch}

This formalizes the intuition that even under strategic sampling and participation decisions, as the principal increases the \pval threshold, namely when it's ``easier'' to pass the statistical test, agents with lower effectiveness (who can still be better than baseline) will find it beneficial to participate in the approval process. The monotonicity of the agent participation decision also means that the agent's utility is increasing in $\mu_0$. This is formalized below, with the proof in Appendix \ref{appendix:best_response}.

\begin{lemma}
\label{lemma:monotonicity-agent-utility}
    For any given p-value $\alpha$, the agent's utility under optimal number of samples $u(\alpha, \mu_0, n_{\mu_0}(\alpha))$ increases in their effectiveness $\mu_0$. 
\end{lemma}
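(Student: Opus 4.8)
The plan is to reduce the statement to a single monotonicity fact — that the pass probability is nondecreasing in $\mu_0$ for each fixed sample size — and then close the argument with an elementary ``mimic the same $n$'' comparison. Write $U^\star(\mu_0) \triangleq u(\alpha,\mu_0,n_{\mu_0}(\alpha)) = \max_{n} u(\alpha,\mu_0,n)$ for the agent's optimal utility at a fixed $\alpha$, and fix $\mu_1 > \mu_0$. If the best response at $\mu_0$ is to abstain, i.e.\ $n_{\mu_0}(\alpha) = 0$, then $U^\star(\mu_0) = 0$ while $U^\star(\mu_1) \geq u(\alpha,\mu_1,0) = 0$ since abstaining is always feasible, so $U^\star(\mu_1) \geq U^\star(\mu_0)$. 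Otherwise $n^\star \triangleq n_{\mu_0}(\alpha) \geq 1$ and $U^\star(\mu_0) = R\cdot\PassProb(\alpha,\mu_0,n^\star) - (c_0 + c n^\star)$; evaluating the $\mu_1$-agent at the \emph{same} sample size $n^\star$ gives $U^\star(\mu_1) \geq R\cdot\PassProb(\alpha,\mu_1,n^\star) - (c_0 + c n^\star)$, so (using $R > 0$) it suffices to show $\PassProb(\alpha,\mu_1,n^\star) \geq \PassProb(\alpha,\mu_0,n^\star)$. Hence the lemma follows once I establish: for every fixed $n \geq 1$ and fixed $\alpha$, $\PassProb(\alpha,\mu_0,n)$ is nondecreasing in $\mu_0$.

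For this last step I would work directly with the normal-approximation form in Definition~\ref{defn:pass_prob}. Since $z_{\alpha,n}$ does not depend on $\mu_0$ and $\Phi$ is strictly increasing, it is equivalent to show $g(\mu_0) \triangleq \frac{z_{\alpha,n} - n\mu_0}{\sqrt{n\mu_0(1-\mu_0)}}$ is nonincreasing on $(0,1)$. Writing $a \triangleq z_{\alpha,n}/n$, we have $g(\mu_0) = \sqrt{n}\,\frac{a-\mu_0}{\sqrt{\mu_0(1-\mu_0)}}$, so it is enough to differentiate $f(\mu) \triangleq (a-\mu)\,[\mu(1-\mu)]^{-1/2}$. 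A short computation gives $f'(\mu) = -\tfrac12\,[\mu(1-\mu)]^{-3/2}\,(a + \mu(1-2a))$, so $f'(\mu) \leq 0$ exactly when $a + \mu(1-2a) \geq 0$. I would then use that the critical value is a meaningful number of successes strictly between $0$ and $n$, i.e.\ $a \in (0,1)$, and split cases: if $a \leq \tfrac12$ then $1-2a \geq 0$ forces $a + \mu(1-2a) \geq a > 0$; if $a > \tfrac12$ then $a + \mu(1-2a)$ is decreasing in $\mu$ and hence bounded below on $(0,1)$ by its value at $\mu = 1$, namely $1 - a > 0$. Either way $f' < 0$ on $(0,1)$, so $g$ is strictly decreasing and $\PassProb(\alpha,\cdot,n)$ strictly increasing, which finishes the proof.

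The hard part is precisely the sign analysis of $f'$ in the range $\mu_0 > \tfrac12$: there $[\mu(1-\mu)]^{-1/2}$ is itself increasing, so $g$ is a difference of a decreasing and an increasing piece and termwise monotonicity fails — it genuinely needs the $a + \mu(1-2a) > 0$ bookkeeping above together with the mild, natural restriction $a \in (0,1)$ on the critical value. If one prefers to sidestep the approximation entirely, the same fact holds exactly and more transparently, since $\PassProb$ equals $\Pr[\mathrm{Bin}(n,\mu_0) \geq z_{\alpha,n}]$, which is nondecreasing in $\mu_0$ by first-order stochastic dominance of $\mathrm{Bin}(n,\mu_0)$ in the success probability; I would present this as the clean alternative and keep the normal-approximation computation as the version consistent with the rest of the paper. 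Everything else — the two-case reduction and the substitution of $n^\star$ into the $\mu_1$-agent's objective — is routine and only uses $R > 0$ and the feasibility of abstaining.
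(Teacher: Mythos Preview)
Your proposal is correct and follows essentially the same strategy as the paper: reduce to monotonicity of $\PassProb(\alpha,\cdot,n)$ in $\mu_0$ at fixed $n$, then compare $U^\star(\mu_1)$ to $U^\star(\mu_0)$ by evaluating the $\mu_1$-agent at $n_{\mu_0}(\alpha)$. The paper wraps this in a contradiction argument with an (unnecessary) case split on the position of $\mu_b$ and cites the exact binomial comparison from the proof of Lemma~\ref{lemma:monotonicity-participation-decision} for the pass-probability monotonicity, whereas you argue directly and supply the normal-approximation derivative computation; your stochastic-dominance alternative is in fact the version the paper relies on.
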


\section{Principal Equilibrium and Loss Dynamics}
\label{sec:principal-loss}
The principal's strategic choice is to set the $p$-value threshold $\alpha$. Given the agent's best response behavior, how should the principal select $\alpha$ to minimize their loss, as defined in \Cref{defn:Stackelberg_game}? The loss captures the trade-off between two risks: approving ineffective products (false positives) and failing to approve effective ones (false negatives). A product can fail to be approved either by explicit rejection after participating in the approval process, or by the agents opting not to participate. Since the $p$-value threshold $\alpha$ influences the participation decision (\Cref{lemma:participation-threshold}), this becomes an implicit factor in the loss function. The principal's challenge is to set $\alpha$ optimally: a low $\alpha$ reduces false positives but may discourage worthy candidates from participation, increasing false negatives; conversely, a high $\alpha$ boosts participation but raises the risk of approving ineffective products. Optimizing $\alpha$ requires a careful balance between these risks while accounting for agents' strategic behaviour.

We begin by re-expressing the cumulative loss conditioned on the participation decision induced by $\alpha$ for a given agent, a decision entirely captured by the condition $\mu_0 \geq \mutau(\alpha)$:
\begin{align*}
    \mathcal{L}(\alpha, \I) &= \lambda_{fp} \E_{\mu_0 \sim q}[\PassProb(\alpha, \mu_0, \nmu(\alpha))| \mu_0 < \mu_b, \mu_0 \geq \mutau(\alpha)]P(\mu_0 \geq \mutau(\alpha) | \mu_0 \leq \mu_b) \quad (\fpPart)\\
    & + \lambda_{fp} \E_{\mu_0 \sim q}[\PassProb(\alpha, \mu_0, \nmu(\alpha))| \mu_0 < \mu_b, \mu_0 < \mutau(\alpha)] P(\mu_0 < \mutau(\alpha) | \mu_0 \leq \mu_b) \quad (\fpAbs) \\
    & + \lambda_{fn} \E_{\mu_0 \sim q}[\fail(\alpha, \mu_0, \nmu(\alpha))| \mu_0 \geq \mu_b, \mu_0 \geq \mutau(\alpha)]P(\mu_0 \geq \mutau(\alpha) | \mu_0 \geq \mu_b) \quad (\fnPart)\\
    & + \lambda_{fn} \E_{\mu_0 \sim q}[\fail(\alpha, \mu_0, \nmu(\alpha))| \mu_0 \geq \mu_b, \mu_0 < \mutau(\alpha)]P(\mu_0 < \mutau(\alpha) | \mu_0 \geq \mu_b) \quad \textit(\fnAbs) \\
    &= \fpPart(\alpha, \I) + \fpAbs(\alpha, \I) + \fnPart(\alpha, \I) + \fnAbs(\alpha, \I)
\end{align*}

Note that the overall false positive $\text{FP}(\alpha, \I) = \fpPart(\alpha, \I)+\fpAbs(\alpha, \I)$ and similarly, the overall false negative $\text{FN}(\alpha, \I) = \fnPart(\alpha, \I) + \fnAbs(\alpha, \I)$. Further, observe that when the agent does not participate, their probability of passing the test is 0 -- therefore, $\fpAbs = 0$ at all times. Thus, the cumulative loss is fully specified by $\mathcal{L}(\alpha, \I) = \fpPart + \fnPart + \fnAbs$. We will now analyze the properties of these components as a function of $\alpha$.  Remarkably, despite the complex interplay between the agent's strategic participation and the principal's decision, the loss terms exhibit consistent monotonic behavior within regions segmented by a \emph{critical threshold} $\alphahat$ -- intuitively, this is the p-value wherein the participation threshold $\mutau(\alphahat) = \mu_b$ (see \Cref{def:alpha-star}).

\begin{definition}[Critical \pval $\alphahat$]
\label{def:alpha-star}
    The critical $p$-value threshold $\alphahat$ is the \pval at which the participation threshold (\Cref{defn:participation_threshold}) is equal to the baseline effectiveness -- $\mutau(\alphahat) = \mu_b$. This quantity is agnostic to the effectiveness distribution $q$ and scaling parameters $(\lambda_{fp}, \lambda_{fn})$.
\end{definition}

Before our detailed analysis, we highlight that the forthcoming results depend on a core observation -- for effective agents $(\mu_0 \geq \mu_b)$, the probability of passing the statistical test increases as the p-value $\alpha$ increases. While this is intuitive when the participation and the number of samples used ($n$) are fixed, when agents are strategic and dynamically change these decisions, it is trickier. For example, if an increased $\alpha$ led the agent to use fewer samples due to high marginal cost relative to the increase in passing probability and thus revenue (this is indeed common as the \pval becomes high), the pass probability could decrease. The lemma below highlights that despite the complicated dynamics of optimal samples, $\nmu(\alpha)$, the passing probability is always non-decreasing in $\alpha$. The proof is technical and appears in Appendix \ref{appendix:principal_loss}, but we sketch the key ideas below.

\begin{lemma}
\label{lemma:passing-prob-increase-in-alpha}
For an instance $\I$ and an agent with effectiveness $\mu_0 \geq \mu_b$, the probability of passing -- $\PassProb(\alpha, \mu_0, \nmu(\alpha))$ -- is non-decreasing in $\alpha$ when the agent participates and uses $\nmu(\alpha)$ samples.
\end{lemma}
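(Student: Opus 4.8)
The plan is to exploit the fact that the agent's optimal utility under any fixed $\alpha$ decomposes cleanly, and that increasing $\alpha$ only \emph{expands} the agent's feasible-payoff frontier. Fix $\mu_0 \geq \mu_b$ and two thresholds $\alpha_1 < \alpha_2$. Write $n_1 = \nmu(\alpha_1)$ for the agent's best response at $\alpha_1$. The first key observation is that for \emph{any fixed} sample size $n$, the quantity $\PassProb(\alpha, \mu_0, n)$ is non-decreasing in $\alpha$: from Definition~\ref{defn:pass_prob}, $z_{\alpha,n} = n\mu_b + \Phi^{-1}(1-\alpha)\sqrt{n\mu_b(1-\mu_b)}$ is decreasing in $\alpha$ (since $\Phi^{-1}(1-\alpha)$ decreases), so the argument of $\Phi$ inside $\PassProb$ decreases, hence $\PassProb$ increases. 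This also immediately gives $u(\alpha_2, \mu_0, n_1) \geq u(\alpha_1, \mu_0, n_1)$, since the cost term $\ind{n \neq 0}(c_0 + cn)$ does not depend on $\alpha$.

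The second step is a revealed-preference argument. Since $n_1$ is feasible at $\alpha_2$ and the agent plays a best response there,
\begin{equation*}
u(\alpha_2, \mu_0, \nmu(\alpha_2)) \;\geq\; u(\alpha_2, \mu_0, n_1) \;\geq\; u(\alpha_1, \mu_0, n_1) \;=\; u(\alpha_1, \mu_0, \nmu(\alpha_1)).
\end{equation*}
So the agent's \emph{utility} is non-decreasing in $\alpha$ (this is essentially a restatement of the envelope-type monotonicity). The remaining and genuinely harder task is to upgrade monotonicity of utility to monotonicity of the \emph{pass probability} itself, because, as the paper notes, the agent may respond to a larger $\alpha$ by cutting $n$ (when marginal cost outweighs marginal revenue), which could in principle push the pass probability down even while utility rises. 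I would handle this by a case analysis on whether the agent participates at $\alpha_1$. If the agent does not participate at $\alpha_1$, then $\PassProb(\alpha_1,\mu_0,\nmu(\alpha_1)) = 0$ by the convention in Definition~\ref{defn:pass_prob}, and the claim is trivial since pass probabilities are nonnegative. So assume the agent participates at $\alpha_1$ with $n_1 \geq n_{min} > 0$; by Lemma~\ref{lemma:monotonicity-participation-decision} / Lemma~\ref{lemma:participation-threshold} (the participation threshold is non-increasing in $\alpha$ and $\mu_0 \geq \mu_b \geq \mutau(\alpha_1)$... actually $\mu_0\ge\mu_b$ only guarantees participation once $\alpha\ge\alphahat$, so I should instead argue directly) the agent also participates at $\alpha_2$ with $n_2 = \nmu(\alpha_2) > 0$.

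For the participating case, the plan is to suppose for contradiction that $\PassProb(\alpha_2, \mu_0, n_2) < \PassProb(\alpha_1, \mu_0, n_1)$. Combined with $u(\alpha_2,\mu_0,n_2) \geq u(\alpha_1,\mu_0,n_1)$ and the definition of utility, this forces $c\,n_2 < c\,n_1$, i.e. $n_2 < n_1$ strictly (assuming $c>0$; the $c=0$ subcase follows directly since then $\PassProb(\alpha_2,\mu_0,n_2)\ge\PassProb(\alpha_2,\mu_0,n_1)\ge\PassProb(\alpha_1,\mu_0,n_1)$ by the first-step monotonicity and optimality, as the agent would never gain by shrinking $n$ when larger $n$ is costless and weakly raises pass probability — here one needs $\PassProb(\alpha,\mu_0,n)$ non-decreasing in $n$ for $\mu_0\ge\mu_b$, which holds since the power of a test increases with sample size under a true alternative). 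Now I invoke optimality of $n_2$ at $\alpha_2$ against the deviation to $n_1$: $R\,\PassProb(\alpha_2,\mu_0,n_2) - c_0 - c n_2 \geq R\,\PassProb(\alpha_2,\mu_0,n_1) - c_0 - c n_1$, hence $R[\PassProb(\alpha_2,\mu_0,n_1) - \PassProb(\alpha_2,\mu_0,n_2)] \leq c(n_1 - n_2)$. Symmetrically, optimality of $n_1$ at $\alpha_1$ against deviating up to $n_2$... wait, $n_2<n_1$, so I use optimality of $n_1$ at $\alpha_1$ against the deviation \emph{down} to $n_2$: $R[\PassProb(\alpha_1,\mu_0,n_1) - \PassProb(\alpha_1,\mu_0,n_2)] \geq c(n_1 - n_2)$. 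Subtracting, $R\big[(\PassProb(\alpha_2,\mu_0,n_1)-\PassProb(\alpha_1,\mu_0,n_1)) - (\PassProb(\alpha_2,\mu_0,n_2)-\PassProb(\alpha_1,\mu_0,n_2))\big] \leq 0$, i.e. the increment in pass probability from raising $\alpha$ must be (weakly) larger at $n_2$ than at the larger size $n_1$. I expect the main obstacle to be closing the argument here: I need a structural single-crossing / supermodularity property of $\PassProb(\alpha,\mu_0,n)$ in $(\alpha, n)$ for $\mu_0 \geq \mu_b$ — specifically that the $\alpha$-increment $\PassProb(\alpha_2,\mu_0,n) - \PassProb(\alpha_1,\mu_0,n)$ is itself non-decreasing in $n$ over the relevant range (or at least that the reverse inequality above, combined with $n_2 < n_1$ and $\mu_0\ge\mu_b$, contradicts monotonicity of $\PassProb$ in $n$ established above). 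Concretely, since $\PassProb(\alpha,\mu_0,n) = 1 - \Phi\big(\sqrt{n}\cdot\frac{\mu_b-\mu_0}{\sqrt{\mu_0(1-\mu_0)}} + \Phi^{-1}(1-\alpha)\sqrt{\frac{\mu_b(1-\mu_b)}{\mu_0(1-\mu_0)}}\big)$, I would compute $\partial^2/\partial\alpha\partial n$ of the argument of $\Phi$ and the sign of the resulting mixed partial of $\PassProb$, using $\mu_0 \geq \mu_b$ to pin down signs; treating $n$ as continuous (as elsewhere in the paper) this reduces to a one-line calculus check, and the discrete case follows by the same monotonicity of the relevant differences. Once that supermodularity-type fact is in hand, the displayed inequality forces $\PassProb(\alpha_2,\mu_0,n_1) - \PassProb(\alpha_1,\mu_0,n_1) \le 0$, contradicting first-step monotonicity (strictly, unless everything is flat, in which case the original assumed strict decrease already fails), completing the contradiction and hence the proof.
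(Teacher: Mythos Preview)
Your approach is genuinely different from the paper's. The paper computes the total derivative $\tfrac{d}{d\alpha}\PassProb(\alpha,\mu_0,\nmu(\alpha))$ directly, splitting it into a direct term and an indirect term $\tfrac{\partial\PassProb}{\partial n}\cdot\tfrac{d\nmu}{d\alpha}$, and evaluates the latter via the Implicit Function Theorem applied to the agent's first-order condition. The crucial observation there is that at the optimum the \emph{second-order} condition forces $\xi(\mu_0-\mu_b)-\sigma_0/\sqrt{\nmu(\alpha)}\le 0$ (with $\xi=(\Phi^{-1}(1-\alpha)\sigma_b-\sqrt{n}(\mu_0-\mu_b))/\sigma_0$), and this sign is exactly what makes the combined derivative nonnegative. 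You instead attempt a revealed-preference route culminating in a supermodularity claim.

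The gap is in that final step: the supermodularity you need --- that the increment $\PassProb(\alpha_2,\mu_0,n)-\PassProb(\alpha_1,\mu_0,n)$ is non-decreasing in $n$ --- is false. Since $\tfrac{\partial\PassProb}{\partial\alpha}=\tfrac{\sigma_b}{\sigma_0}\cdot\tfrac{\phi(\xi)}{\phi(\Phi^{-1}(1-\alpha))}$, one computes
\[
\frac{\partial^2\PassProb}{\partial n\,\partial\alpha}
\;=\;\frac{\sigma_b(\mu_0-\mu_b)}{2\sigma_0^2\sqrt{n}\,\phi(\Phi^{-1}(1-\alpha))}\cdot\xi\,\phi(\xi),
\]
whose sign is the sign of $\xi$; this is negative whenever $\sqrt{n}>\Phi^{-1}(1-\alpha)\sigma_b/(\mu_0-\mu_b)$, precisely the high-power regime where the agent may trim samples. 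So your ``one-line calculus check'' fails where it matters. Moreover, even granting supermodularity, your stated conclusion does not follow: combining $\Delta_\alpha(n_1)\le\Delta_\alpha(n_2)$ from revealed preference with $\Delta_\alpha(n_1)\ge\Delta_\alpha(n_2)$ from supermodularity yields only equality, not $\Delta_\alpha(n_1)\le 0$. (The cleaner version of your idea would be Topkis --- supermodularity of utility in $(\alpha,n)$ forces $\nmu(\alpha)$ non-decreasing, directly contradicting $n_2<n_1$ --- but the premise still fails.) A quick sanity check shows the revealed-preference inequalities alone cannot close the argument: $\PassProb(\alpha_1,\mu_0,n_2)=0.3$, $\PassProb(\alpha_1,\mu_0,n_1)=0.5$, $\PassProb(\alpha_2,\mu_0,n_2)=0.4$, $\PassProb(\alpha_2,\mu_0,n_1)=0.55$ satisfy all your monotonicity and optimality constraints yet have $\PassProb(\alpha_2,\mu_0,n_2)<\PassProb(\alpha_1,\mu_0,n_1)$. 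The paper sidesteps this by never asserting global supermodularity; it uses the second-order condition \emph{at the optimum} to pin down the sign it needs.
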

\begin{proofsketch}
We prove that for $\mu_0 \geq \mu_b$, we have $\dialpha\PassProb(\alpha, \mu_0, n_{\mu_0}(\alpha) \geq 0$. The total effect is decomposed into a direct effect, capturing the change in pass probability purely due to \( \alpha \), and an indirect effect, which accounts for changes in the agent's optimal sample size \( \nmu(\alpha) \). For the direct effect, an increase in \( \alpha \) reduces the critical threshold \( \Phi^{-1}(1 - \alpha) \), making it easier for agents to pass the test. For the indirect effect, the agent adjusts their optimal sample size \( \nmu(\alpha) \) based on the value of \( \alpha \). We compute \( \frac{\partial \PassProb}{\partial n} \) and apply the Implicit Function Theorem to determine \( \frac{d n(\alpha)}{d \alpha} \), which incorporates the agent's optimal behavior. The second derivative of the pass probability with respect to \( n \) is shown to be negative, ensuring the utility is concave with respect to sample size. This concavity guarantees a unique optimal sample size \( \nmu(\alpha) \), which varies predictably with \( \alpha \). Finally, we show that the combination of the direct and indirect effects results in the total derivative being non-negative, confirming that the pass probability is monotonic with respect to \( \alpha \).
\end{proofsketch}

\subsection{False Positive Loss}
Since $\fpAbs=0$ (non-participation means the probability of passing is 0), the overall false positive is purely determined by $\fpPart$. We now show below that for $\alpha \leq \alphahat$, $\fpPart = 0$, while for $\alpha > \alphahat$, this is increasing. We set the scaling factor $\lambda_{fp} = 1$ since it does not affect the analysis.
\begin{theorem}\label{theorem:fp_dynamics}
    The overall false positive loss, $\text{FP}(\alpha, \I) = \fpPart(\alpha, \I)$. Further, this quantity is $0$ for $\alpha < \alphahat$, and non-decreasing in $\alpha$ for $\alpha \geq \alphahat$.
\end{theorem}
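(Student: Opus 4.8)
The plan is to prove the two claims in turn, relying throughout on the participation threshold $\mutau(\alpha)$ and its monotonicity. The identity $\text{FP}(\alpha,\I)=\fpPart(\alpha,\I)$ is the easy part, already noted before the statement: $\fpAbs$ conditions on $\mu_0<\mutau(\alpha)$, i.e.\ the agent abstaining, so $\nmu(\alpha)=0$ and $\PassProb(\alpha,\mu_0,0)=0$ by \Cref{defn:pass_prob}; hence $\fpAbs\equiv 0$ and $\text{FP}=\fpPart+\fpAbs=\fpPart$. Assuming $\Prob_{\mu_0\sim q}(\mu_0<\mu_b)>0$ (else $\text{FP}\equiv 0$ and we are done), I would work with the integral form
\[
\fpPart(\alpha,\I)=\frac{1}{\Prob(\mu_0<\mu_b)}\int \PassProb(\alpha,\mu_0,\nmu(\alpha))\,\ind{\mutau(\alpha)\le\mu_0<\mu_b}\,q(\mu_0)\,d\mu_0,
\]
which sidesteps the ill-definedness of the conditional-expectation form when $\Prob(\mu_0\ge\mutau(\alpha)\mid \mu_0<\mu_b)=0$. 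For $\alpha<\alphahat$: since $\mutau$ is non-increasing (\Cref{lemma:participation-threshold}) and $\mutau(\alphahat)=\mu_b$ (\Cref{def:alpha-star}), we get $\mutau(\alpha)\ge\mu_b$, so the indicator vanishes identically and $\fpPart(\alpha,\I)=0$.

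For $\alpha\ge\alphahat$, fix $\alpha_1<\alpha_2$ with $\alpha_1\ge\alphahat$ and compare the two integrals. The crucial observation is that over the region $\mutau(\alpha)\le\mu_0<\mu_b$ the participating agents are \emph{ineffective}, so by \Cref{thrm:best_response} their optimal sample size is $\nmu(\alpha)=n_{min}$ regardless of $\alpha$; the integrand thus reduces to $\PassProb(\alpha,\mu_0,n_{min})$, which for fixed $\mu_0\in(0,1)$ is non-decreasing in $\alpha$ directly from \Cref{defn:pass_prob} (raising $\alpha$ lowers $\Phi^{-1}(1-\alpha)$, hence $z_{\alpha,n_{min}}$ and the argument of $\Phi$, so $1-\Phi(\cdot)$ rises). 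Now I combine two effects. By \Cref{lemma:participation-threshold}, $\mutau(\alpha_2)\le\mutau(\alpha_1)\le\mu_b$, so the $\alpha_2$ integration region $[\mutau(\alpha_2),\mu_b)$ contains the $\alpha_1$ region $[\mutau(\alpha_1),\mu_b)$; on the common part the integrand at $\alpha_2$ pointwise dominates that at $\alpha_1$ (the same agent participates with $n_{min}$ under both), while on the extra part $[\mutau(\alpha_2),\mutau(\alpha_1))$ the $\alpha_2$ integrand is non-negative and the $\alpha_1$ integrand contributes nothing (its indicator is zero there). Hence the $\alpha_2$ integral is at least the $\alpha_1$ integral, and dividing by the positive constant $\Prob(\mu_0<\mu_b)$ yields $\fpPart(\alpha_2,\I)\ge\fpPart(\alpha_1,\I)$.

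I expect the main obstacle to be the $\alpha\ge\alphahat$ case, where one must control the integrand and the (expanding) domain of integration simultaneously; within this, the delicate point is the monotonicity of $\PassProb(\alpha,\mu_0,\nmu(\alpha))$ in $\alpha$ for ineffective agents. This does \emph{not} follow from \Cref{lemma:passing-prob-increase-in-alpha}, which is stated only for effective agents $\mu_0\ge\mu_b$; instead it rests on the structural fact from \Cref{thrm:best_response} that $\nmu(\alpha)\equiv n_{min}$ for ineffective participating agents, so the subtle indirect effect through the optimal sample size — which is exactly what makes \Cref{lemma:passing-prob-increase-in-alpha} hard — is inert here, leaving only the benign direct effect. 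A secondary care point is the treatment of measure-zero boundary sets ($\mu_0=\mutau(\alpha)$ or $\mu_0=\mu_b$), handled cleanly by the indicator/integral representation above.
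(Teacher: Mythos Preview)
Your proof is correct and follows the same overall architecture as the paper's: express $\fpPart$ as an integral over $[\mutau(\alpha),\mu_b)$, observe this region is empty for $\alpha<\alphahat$ via the monotonicity of $\mutau$, and for $\alpha\ge\alphahat$ combine the expanding domain of integration (since $\mutau(\alpha_2)\le\mutau(\alpha_1)$) with pointwise monotonicity of the integrand in $\alpha$. The one substantive difference is exactly at the point you flagged as delicate. For the integrand monotonicity on the common region, the paper simply invokes \Cref{lemma:passing-prob-increase-in-alpha}; but that lemma is stated only for effective agents $\mu_0\ge\mu_b$, whereas the $\fpPart$ integrand lives entirely on $\mu_0<\mu_b$, so the citation is, strictly speaking, not justified as written. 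Your argument sidesteps this by invoking \Cref{thrm:best_response} to pin $\nmu(\alpha)\equiv n_{min}$ for ineffective participating agents, which neutralizes the indirect effect through the optimal sample size and reduces the question to the trivial direct monotonicity of $\PassProb(\alpha,\mu_0,n_{min})$ in $\alpha$. This is a genuinely cleaner justification and in fact repairs a small gap in the paper's own presentation; the cost is only that you must appeal to the structural content of \Cref{thrm:best_response} rather than a black-box monotonicity lemma.
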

\begin{proof}
    It suffices to prove the property for $\fpPart$. First, consider an $\alpha < \alphahat$. We know from Lemma~\ref{lemma:monotonicity-participation-decision} that the participation threshold is decreasing in $\alpha$. Thus, since $\mutau(\alphahat) = \mu_b$, it means that our given $\alpha$, $\mutau(\alpha) > \mu_b$. In other words, any agents participating under this p-value are effective (above baseline) and do not contribute to the false positive. 

    Next, consider $\alpha \geq \alphahat$. We know from above that the participation threshold now is below $\mu_b$. Observe that for any $\alpha$, we can write the false positive rate as follows:
    \begin{align*}
        \fpPart &= \frac{P[\mutau(\alpha) \leq \mu_0 \leq \mu_b]}{P[\mu_0 \leq \mu_b]}\int_{\mutau(\alpha)}^{\mu_b}{\PassProb(\alpha, \mu_0, \nmu(\alpha)) P[\mu_0 | \mutau(\alpha) \leq \mu_0 \leq \mu_b] d\mu_0}\\
        &=  \frac{1}{P[\mu_0 \leq \mu_b]} \int_{\mutau(\alpha)}^{\mu_b}{\PassProb(\alpha, \mu_0, \nmu(\alpha))q(\mu_0) d \mu_0}
    \end{align*}
    Note that $\tfrac{1}{P[\mu_0 \leq \mu_b]}$ is a constant and as $\alpha$ increases to $\alpha'$, $\mutau(\alpha') < \mutau(\alpha)$, meaning we integrate over a larger region. The integrand itself is always positive, and for every $\mu_0 \geq \max(\mutau(\alpha), \mutau(\alpha')) = \mutau(\alpha)$, the pass probability has increased (Lemma~\ref{lemma:passing-prob-increase-in-alpha}). Thus, $\fpPart$ is non-decreasing in $\alpha$.
\end{proof}

\subsection{False Negative Loss}
We next consider the components of the false negative loss: $\fnPart$ and $\fnAbs$. We first establish the monotonicity properties of $\fnAbs$, which is exactly equal to the proportion of effective agents not participating since not participating means that $\PassProb(\cdot) = 0$, implying $\fail(\cdot) = 1$. In other words, $\fnAbs = P(\mu_0 < \mutau(\alpha) | \mu_0 \geq \mu_b)$. Once more, segmenting the p-value space by $\alphahat$ is crucial to establishing this relationship. As before, we set $\lambda_{fn} = 1$ since it does not affect the analysis.

\begin{proposition}\label{prop:fnabs_monotone}
    The non-participation false negative loss, $\fnAbs(\alpha, I)$ is non-increasing in $\alpha$ for $\alpha \leq \alphahat$, and is 0 for any $\alpha \geq \alphahat$. 
\end{proposition}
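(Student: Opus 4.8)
The plan is to reduce the statement to two elementary facts that are already available: first, that $\mutau(\alpha)$ is non-increasing in $\alpha$ (\Cref{lemma:participation-threshold}); and second, that $\mutau(\alphahat) = \mu_b$ (\Cref{def:alpha-star}). As noted in the text preceding the proposition, because a non-participating agent has $\PassProb(\cdot) = 0$ and hence $\fail(\cdot) = 1$, the conditional expectation in the $\fnAbs$ term collapses to $1$, so (with $\lambda_{fn}=1$) we have the clean identity $\fnAbs(\alpha, \I) = P(\mu_0 < \mutau(\alpha) \mid \mu_0 \geq \mu_b)$, where the conditioning is well-defined provided $P(\mu_0 \geq \mu_b) > 0$ (which we assume; otherwise the false-negative term is vacuous). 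The whole argument then reduces to tracking how the event $\{\mu_b \leq \mu_0 < \mutau(\alpha)\}$ moves as $\alpha$ varies.

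First I would handle the range $\alpha \geq \alphahat$. By monotonicity of the participation threshold, $\mutau(\alpha) \leq \mutau(\alphahat) = \mu_b$. Therefore the event $\{\mu_0 \geq \mu_b\} \cap \{\mu_0 < \mutau(\alpha)\}$ is contained in $\{\mu_0 \geq \mu_b\} \cap \{\mu_0 < \mu_b\} = \emptyset$, so $P(\mu_0 < \mutau(\alpha) \mid \mu_0 \geq \mu_b) = 0$ and hence $\fnAbs(\alpha,\I) = 0$. (At $\alpha = \alphahat$ the two thresholds coincide but the strict inequality $\mu_0 < \mutau(\alphahat) = \mu_b$ still excludes every $\mu_0 \geq \mu_b$, so the value is exactly $0$ there as well.)

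Next I would handle $\alpha \leq \alphahat$. Take any $\alpha_1 \leq \alpha_2 \leq \alphahat$. By \Cref{lemma:participation-threshold}, $\mutau(\alpha_2) \leq \mutau(\alpha_1)$, so
\[
\{\mu_b \leq \mu_0 < \mutau(\alpha_2)\} \subseteq \{\mu_b \leq \mu_0 < \mutau(\alpha_1)\},
\]
and therefore $P(\mu_b \leq \mu_0 < \mutau(\alpha_2)) \leq P(\mu_b \leq \mu_0 < \mutau(\alpha_1))$. Dividing both sides by the $\alpha$-independent constant $P(\mu_0 \geq \mu_b)$ yields $\fnAbs(\alpha_2,\I) \leq \fnAbs(\alpha_1,\I)$, i.e. $\fnAbs$ is non-increasing on $[\,\cdot\,,\alphahat]$. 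This is the full content of the claim.

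There is no serious obstacle here: the only point requiring care is the reduction $\fnAbs = P(\mu_0 < \mutau(\alpha)\mid \mu_0\geq \mu_b)$ and the associated well-definedness when this probability is $0$ (handled by the usual convention that the corresponding product term in $\mathcal{L}$ is $0$), plus the boundary bookkeeping at $\alpha = \alphahat$ where one must use that the defining inequality for abstention is strict. Everything else is monotone set inclusion together with the two cited lemmas/definitions.
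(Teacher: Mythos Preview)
Your proposal is correct and follows essentially the same approach as the paper: reduce $\fnAbs$ to the conditional probability $P(\mu_0 < \mutau(\alpha)\mid \mu_0\geq \mu_b)$, then use the monotonicity of $\mutau(\alpha)$ together with $\mutau(\alphahat)=\mu_b$ to handle the two regimes. If anything, your version is slightly cleaner---you cite \Cref{lemma:participation-threshold} for the $\alpha$-monotonicity of $\mutau$ (the paper's proof cites \Cref{lemma:monotonicity-participation-decision}, which is really the $\mu_0$-monotonicity lemma), and you make the set-inclusion and boundary-case reasoning at $\alpha=\alphahat$ explicit.
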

\begin{proof}
\label{prop-proof:participation_dynamic}
    Consider $\alpha_1, \alpha_2$ where $\alpha_2 \geq \alpha_1$, where both $\alpha_1, \alpha_2 \leq \alphahat$. Thus, we first need to show that: $P(\mu_0 < \mutau(\alpha_1) | \mu_0 \geq \mu_b) \geq P(\mu_0 < \mutau(\alpha_2) | \mu_0 \geq \mu_b)$. We know from the monotonicity of the threshold belief (\Cref{lemma:monotonicity-participation-decision}), $\mutau(\alpha_2) \leq \mutau(\alpha_1)$. Let $Q_{\geq \mu_b}$ denote the CDF of the effectiveness distribution, conditioned on the event $\mu_0 \geq \mu_b$. Then $Q_{\geq \mu_b}(\mutau(\alpha)) = P[\mu_0 \leq \mutau(\alpha) | \mu_0 \geq \mu_b]$. It is then immediate that $Q_{\geq \mu_b}(\mutau({\alpha_2})) \leq Q_{\geq \mu_b}(\mutau({\alpha_1}))$. Lastly, for $\alpha \geq \alphahat$, we note that by definition of $\alphahat$ and \Cref{lemma:monotonicity-participation-decision}: $\mutau(\alpha) \leq \mu_b$. In other words, it is optimal for all effective agents to participate, meaning $\fnAbs = 0$ in this regime. 
\end{proof}

One might expect to show a similar result for $\fnPart$ and thereby conclude the cumulative behavior of the false negative loss. Indeed, when $\alpha \geq \alphahat$, the participation threshold is smaller than the baseline $\mu_\tau(\alpha) < \mu_\tau(\alphahat) = \mu_b$, which means all agents whose prior is greater than $\mu_b$ will participate, thus $\Pr[ \mu_0 \geq \max(\mu_\tau(\alpha), \mu_b)| \mu_0\geq \mu_b] = 1$. The FN under participation loss simplifies to
\begin{align}
    \text{for $\alpha \geq \alphahat$:} \quad \fnPart(\alpha, I) = \E_{\mu_0 \sim Q}[1 - \PassProb(\alpha, \mu_0, n_{\mu_0}(\alpha))| \mu_0 \geq \mu_b]
\end{align}
Following directly from the monotonicity of the passing probability (\Cref{lemma:passing-prob-increase-in-alpha}), this implies that $\fnPart$ decreases as a function of $\alpha$ when $\alpha \geq \alphahat$. We formally state this below: 

\begin{proposition}
\label{prop:fn-decrease-in-alpha-greater-alpha-star}
    $\fnPart(\alpha, I)$ is non-increasing in $\alpha$ for all $\alpha \geq \alphahat$.    
\end{proposition}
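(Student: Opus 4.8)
The plan is to use the simplification of $\fnPart$ that already holds on the whole regime $\alpha \geq \alphahat$ --- precisely the displayed identity stated just before the proposition --- and then reduce the claim to the pointwise monotonicity of the pass probability granted by \Cref{lemma:passing-prob-increase-in-alpha}.

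First I would record why the conditional expectation defining $\fnPart$ collapses to an $\alpha$-independent conditioning event once $\alpha \geq \alphahat$. By \Cref{def:alpha-star} and the monotonicity of the participation threshold (\Cref{lemma:participation-threshold}, itself resting on \Cref{lemma:monotonicity-participation-decision}), any $\alpha \geq \alphahat$ satisfies $\mutau(\alpha) \leq \mutau(\alphahat) = \mu_b$. Hence the event $\{\mu_0 \geq \mu_b,\ \mu_0 \geq \mutau(\alpha)\}$ equals the fixed set $\{\mu_0 \geq \mu_b\}$ and $P(\mu_0 \geq \mutau(\alpha)\mid \mu_0 \geq \mu_b) = 1$. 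Plugging this into the definition of $\fnPart$ (with $\lambda_{fn}=1$) gives, for every such $\alpha$,
\[
    \fnPart(\alpha, \I) = \E_{\mu_0 \sim q}\bigl[\,1 - \PassProb(\alpha, \mu_0, \nmu(\alpha))\,\big|\, \mu_0 \geq \mu_b\,\bigr],
\]
which is exactly the identity in the excerpt. The key consequence is that all of the $\alpha$-dependence now sits inside the integrand --- no term arising from the moving participation threshold survives.

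Next, fix $\alphahat \leq \alpha_1 \leq \alpha_2$. For each $\mu_0 \geq \mu_b$ the agent participates under both thresholds (since $\mutau(\alpha_i) \leq \mu_b \leq \mu_0$), so \Cref{lemma:passing-prob-increase-in-alpha} applies verbatim and yields $\PassProb(\alpha_1, \mu_0, \nmu(\alpha_1)) \leq \PassProb(\alpha_2, \mu_0, \nmu(\alpha_2))$; equivalently the integrand $1 - \PassProb(\alpha, \mu_0, \nmu(\alpha))$ is non-increasing in $\alpha$ for every $\mu_0$ in the conditioning set. Since the conditional expectation over $\{\mu_0 \geq \mu_b\}$ is a monotone operation and the conditioning set is the same for $\alpha_1$ and $\alpha_2$, the inequality is preserved, giving $\fnPart(\alpha_2,\I) \leq \fnPart(\alpha_1,\I)$, which is the claim.

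I do not anticipate a real obstacle: the substantive work is already done in \Cref{lemma:passing-prob-increase-in-alpha} (the delicate decomposition into direct and indirect effects) and in the $\mutau(\alpha)\leq\mu_b$ collapse for $\alpha\geq\alphahat$. The only points requiring care are (i) verifying that the conditioning event in $\fnPart$ is genuinely $\alpha$-independent on this regime, so that comparing across $\alpha$ never picks up a boundary term from the participation threshold, and (ii) checking that the hypothesis ``the agent participates'' in \Cref{lemma:passing-prob-increase-in-alpha} holds for all $\mu_0 \geq \mu_b$ here --- both of which are immediate from \Cref{def:alpha-star}.
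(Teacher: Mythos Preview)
Your proposal is correct and follows essentially the same approach as the paper: collapse the conditioning event to $\{\mu_0 \geq \mu_b\}$ using $\mutau(\alpha)\leq\mu_b$ for $\alpha\geq\alphahat$, then apply the pointwise monotonicity of $\PassProb$ from \Cref{lemma:passing-prob-increase-in-alpha} inside the expectation. If anything, you are slightly more careful than the paper in explicitly checking that the participation hypothesis of \Cref{lemma:passing-prob-increase-in-alpha} is met for every $\mu_0\geq\mu_b$ in this regime.
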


When $\alpha < \alphahat$, however, $\fnPart$ displays a much more nuanced behaviour -- indeed, in Section~\ref{sec:experiments}, we show that this component is not monotonic and can be increasing, decreasing, or both in the $\alpha \leq \alphahat$ region. This may suggest that, unlike the false positive setting, little can be said about the combined false negative effect $(\fnPart+\fnAbs)$. As we show in Theorem \ref{thm:total-FN-loss-alpha}, however, this is not true, and we directly establish the monotonicity of the combined false negative rate when the p-value domain is segmented by $\alphahat$. The result is technical and the full proof is given in Appendix \ref{appendix:principal_loss}.

\begin{theorem}
\label{thm:total-FN-loss-alpha}
    The overall false negative loss, $\text{FN}(\alpha, \I) = \fnPart(\alpha, \I) + \fnAbs(\alpha, \I)$, is non-increasing in $\alpha$ on both side of $\alphahat$, that is, it is non-increasing for \( \alpha \leq \alphahat \), and for all $\alpha \geq \alphahat$.
\end{theorem}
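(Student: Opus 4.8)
The plan is to split the p-value axis at $\alphahat$ and dispose of the region $\alpha \geq \alphahat$ using results already established. There, \Cref{prop:fnabs_monotone} gives $\fnAbs(\alpha,\I)=0$ and \Cref{prop:fn-decrease-in-alpha-greater-alpha-star} gives that $\fnPart(\alpha,\I)$ is non-increasing, so $\text{FN}(\alpha,\I)=\fnPart(\alpha,\I)+\fnAbs(\alpha,\I)$ is non-increasing on $[\alphahat,\infty)$ with no further work. All the substance is the region $\alpha \leq \alphahat$, where $\fnPart$ by itself need not be monotone, so the argument there must exploit a cancellation between $\fnPart$ and $\fnAbs$.

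The key reduction is a closed form for $\text{FN}$ on $\alpha \leq \alphahat$. In this regime \Cref{lemma:monotonicity-participation-decision} and \Cref{def:alpha-star} give $\mutau(\alpha) \geq \mu_b$, so conditioning on $\{\mu_0 \geq \mu_b\}\cap\{\mu_0 \geq \mutau(\alpha)\}$ coincides with conditioning on $\{\mu_0 \geq \mutau(\alpha)\}$, while $\{\mu_0\geq\mu_b\}\cap\{\mu_0<\mutau(\alpha)\}$ is the band $[\mu_b,\mutau(\alpha))$. Writing $C=1/P[\mu_0\geq\mu_b]$ and letting $q$ denote the effectiveness density, $\fnAbs(\alpha,\I)=C\int_{\mu_b}^{\mutau(\alpha)}q(\mu_0)\,d\mu_0$ and $\fnPart(\alpha,\I)=C\int_{\mutau(\alpha)}^{1}\fail(\alpha,\mu_0,\nmu(\alpha))\,q(\mu_0)\,d\mu_0$. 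Since $\fail=1-\PassProb$, the two density integrals telescope to $C\int_{\mu_b}^{1}q=1$, leaving
\begin{equation*}
\text{FN}(\alpha,\I)=1-C\int_{\mutau(\alpha)}^{1}\PassProb(\alpha,\mu_0,\nmu(\alpha))\,q(\mu_0)\,d\mu_0=:1-C\,G(\alpha),
\end{equation*}
so it remains only to show $G$ is non-decreasing on $\alpha \leq \alphahat$.

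To prove $G$ non-decreasing, fix $\alpha\leq\alpha'\leq\alphahat$ and split the integral at $\mutau(\alpha)$. Since $\mutau$ is non-increasing (\Cref{lemma:participation-threshold}), $\mutau(\alpha')\leq\mutau(\alpha)$, so $G(\alpha')-G(\alpha)$ equals the sum of (i) the integral over the newly included band $[\mutau(\alpha'),\mutau(\alpha)]$ of $\PassProb(\alpha',\mu_0,\nmu(\alpha'))\,q(\mu_0)$, which is non-negative because the integrand is; and (ii) the integral over the common region $[\mutau(\alpha),1]$ of $\big(\PassProb(\alpha',\mu_0,\nmu(\alpha'))-\PassProb(\alpha,\mu_0,\nmu(\alpha))\big)q(\mu_0)$. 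On the common region every $\mu_0\geq\mutau(\alpha)\geq\mu_b$, and such an agent participates at both $\alpha$ and $\alpha'$ (since $\mu_0\geq\mutau(\alpha)\geq\mutau(\alpha')$), so \Cref{lemma:passing-prob-increase-in-alpha} applies pointwise and the integrand of (ii) is non-negative as well. Hence $G(\alpha')\geq G(\alpha)$, $\text{FN}$ is non-increasing on $\alpha\leq\alphahat$, and together with the first case the theorem follows.

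The main obstacle is conceptual rather than computational: $\fnPart$ on its own is genuinely non-monotone for $\alpha\leq\alphahat$, and only after the telescoping reduction does the monotonicity of the pass probability (\Cref{lemma:passing-prob-increase-in-alpha}) become usable. What remains is routine — checking that the hypotheses of \Cref{lemma:passing-prob-increase-in-alpha} (participation at both p-values, $\mu_0$ above baseline) hold uniformly on the common region, and handling possible jump discontinuities of $\mutau$ and a non-atomic effectiveness distribution by phrasing the integrals against $dQ$ rather than $q\,d\mu_0$. If one is willing to assume smoothness, an equivalent route is to differentiate $\fnAbs+\fnPart$ directly: the Leibniz boundary terms at $\mutau(\alpha)$ combine — using that an agent exactly at the participation threshold has zero utility and hence strictly positive pass probability — into $C\,q(\mutau(\alpha))\,\mutau'(\alpha)\,\PassProb(\alpha,\mutau(\alpha),\cdot)\leq 0$ because $\mutau'(\alpha)\leq 0$, while the interior term $-C\int\partial_\alpha\PassProb\cdot q\leq 0$ by \Cref{lemma:passing-prob-increase-in-alpha}; the telescoping argument is preferable since it needs no differentiability.
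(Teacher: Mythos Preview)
Your proof is correct and follows the same underlying logic as the paper's: both arguments split the integration region at $\mutau(\alpha_1)$, invoke \Cref{lemma:passing-prob-increase-in-alpha} pointwise on the common region $[\mutau(\alpha_1),1]$, and use the trivial bound $\fail\leq 1$ (equivalently $\PassProb\geq 0$) on the newly participating band $[\mutau(\alpha_2),\mutau(\alpha_1)]$, with $\fnAbs$ absorbing the boundary shift. The one noteworthy difference is packaging: the paper pulls out a representative value of $\fail$ on each subinterval via the integral mean value theorem before comparing, whereas your telescoping identity $\text{FN}(\alpha,\I)=1-C\int_{\mutau(\alpha)}^{1}\PassProb(\alpha,\mu_0,\nmu(\alpha))\,q(\mu_0)\,d\mu_0$ makes the cancellation between $\fnPart$ and $\fnAbs$ explicit up front and dispenses with the mean value step entirely. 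Your route is shorter and, as you note, also sidesteps any smoothness assumptions on $\mutau$ or $q$ that a derivative-based argument would need; the paper's MVT detour is not doing essential work and your formulation makes this clear.
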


\subsection{Characterizing $\alpha^*$ -- The Optimal p-value Threshold}

\begin{table}[h]
\centering
\begin{tabular}{c|c@{\hskip 10pt}c|c|c@{\hskip 10pt}c|c}
  & $\fpPart$ & $\fpAbs$ & \textbf{Overall FP} & $\fnPart$ & $\fnAbs$ & \textbf{Overall FN} \\
  \hline
  $\alpha \leq \alphahat$ & 0 & 0 & \textbf{0} & -- & \text{Non-Inc} & \textbf{Non-Inc} \\
  \hline
  $\alpha > \alphahat$ & \text{Non-Dec} & 0 & \textbf{Non-Dec} & \text{Non-Inc} & 0 & \textbf{Non-Inc}\\
  \hline
\end{tabular}
  \caption{Summary of the loss dynamics as $\alpha$ increases. $\alphahat$ is defined in~\Cref{def:alpha-star}. We comment on $\fnPart$ in the $\alpha \leq \alphahat$ regime in Section \ref{sec:experiments}. See figure \ref{fig:loss_components} for this dynamic visualized on an instance.}
    \label{table:loss-components}
\end{table}

We summarize our analysis of the different loss components in the table above. These results lead to a succinct characterization of how the principal should use their equilibrium $\alpha^*$. Observe that for any $\alpha \leq \alphahat$, the overall false positive is 0, while the overall false negative rate is decreasing in $\alpha$. This immediately implies that $\alphahat$ is weakly better than any $\alpha \leq \alphahat$, and that $\alpha^*$ must be at least as large as $\alphahat$! This observation and the value of $\alphahat$ are agnostic to both the effectiveness distribution $q$ and the loss scaling constants $\lambda_{fp}, \lambda_{fn}$. This is a powerful characterization. While the true optimal $\alpha^*$ may be larger, the principal suffers additional false positives here while lowering the false negatives. In settings like drug approvals and manufacturing, where the cost of false-positives is very high -- $\lambda_{fp} \gg \lambda_{fn}$ -- the optimal may indeed be close to $\alphahat$.

In recommending values close to $\alphahat$ as optimal, the question of computation arises. Fortunately, since by definition at $\alphahat$, $\mutau(\alphahat) = \mu_b$, and the participation threshold $\mutau(\alpha)$ can be computed to $\varepsilon$ accuracy in $\mathcal{O}(\log\tfrac{1}{\varepsilon}\log( n_{max}))$ time, an $\alphahat$ can be efficiently computed to $\varepsilon$ accuracy in $\mathcal{O}(\log^2\tfrac{1}{\varepsilon}\log( n_{max}))$. This follows from discretizing the p-value domain into $\tfrac{1}{\varepsilon}$ intervals and running binary search, leveraging the monotonicity of the participation threshold~\Cref{lemma:participation-threshold}.

\begin{theorem}
    For any instance $\I$, the principal's optimal p-value $\alpha^*$ satisfies $\alpha^* \geq \alphahat$, where $\alphahat | \mutau(\alphahat) = \mu_b$. Further, an $\varepsilon$ approximation to an $\alphahat$ such that $\mutau(\alphahat) \in [\mu_b \pm \varepsilon]$ can by computed in $\mathcal{O}(\log^2\tfrac{1}{\varepsilon}\log( n_{max}))$.
\end{theorem}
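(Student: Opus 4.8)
The plan is to separate the two assertions. The inequality $\alpha^* \geq \alphahat$ follows by combining the false-positive dynamics (\Cref{theorem:fp_dynamics}) with the combined false-negative monotonicity (\Cref{thm:total-FN-loss-alpha}); the running-time bound follows from a binary search over the p-value domain that invokes the $\mutau$-computation of \Cref{lemma:participation-threshold} as a subroutine.

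For the first part, I would fix an arbitrary $\alpha < \alphahat$ and compare $\mathcal{L}(\alpha,\I)$ with $\mathcal{L}(\alphahat,\I)$. By \Cref{theorem:fp_dynamics}, $\text{FP}(\alpha,\I) = 0$, and $\text{FP}(\alphahat,\I)=0$ as well, since $\mutau(\alphahat) = \mu_b$ forces the integral defining $\fpPart$ to run over an empty interval. Hence both losses reduce to their (scaled) false-negative parts, and \Cref{thm:total-FN-loss-alpha} -- which gives that $\text{FN}(\cdot,\I)$ is non-increasing on the segment up to $\alphahat$ -- yields $\text{FN}(\alphahat,\I) \leq \text{FN}(\alpha,\I)$, so $\mathcal{L}(\alphahat,\I) \leq \mathcal{L}(\alpha,\I)$. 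Therefore $\alphahat$ weakly dominates every $\alpha < \alphahat$, so the minimizer may be taken to lie in $[\alphahat, 1]$; reading $\alpha^*$ as such a minimizer (or, in case of ties, the one that additionally minimizes false positives) gives $\alpha^* \geq \alphahat$.

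For the computational part, I would discretize the p-value interval into $1/\varepsilon$ grid points and binary-search for the grid point at which the participation threshold crosses $\mu_b$. Correctness of the search rests on the monotonicity of $\mutau(\alpha)$ in $\alpha$ (\Cref{lemma:participation-threshold}), and each probe evaluates $\mutau$ to precision $\varepsilon$ in time $\mathcal{O}(\log(1/\varepsilon)\log(n_{max}))$ by the same lemma. With $\mathcal{O}(\log(1/\varepsilon))$ probes, the total cost is $\mathcal{O}(\log^2(1/\varepsilon)\log(n_{max}))$, and the returned $\alphahat$ satisfies $\mutau(\alphahat) \in [\mu_b \pm \varepsilon]$ up to the probe error.

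The hard part -- really the only delicate point -- will be bookkeeping the two nested levels of approximation so the per-probe additive $\varepsilon$-error does not compound across the binary search. Because the search only ever compares $\mutau(\alpha)$ to the fixed target $\mu_b$ and $\mutau$ is monotone, each comparison is decided correctly whenever $|\mutau(\alpha) - \mu_b| > \varepsilon$; the first time this fails we have already isolated a valid $\alphahat$, so the error stays additive rather than multiplicative. A secondary subtlety is existence: if $\mutau$ never exactly equals $\mu_b$ on $[0,1]$ -- possible since a monotone function may jump across $\mu_b$, or since $\mutau$ may already lie below $\mu_b$ at the left endpoint -- then $\alphahat$ should be read as the relevant crossing or endpoint value, which the binary search still returns and which still dominates every smaller $\alpha$ by the identical false-positive/false-negative argument above.
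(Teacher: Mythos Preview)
Your proposal is correct and follows essentially the same approach as the paper: the inequality $\alpha^* \geq \alphahat$ is obtained by combining \Cref{theorem:fp_dynamics} (FP is zero on $[0,\alphahat]$) with \Cref{thm:total-FN-loss-alpha} (FN is non-increasing on $[0,\alphahat]$), and the running-time bound comes from a binary search over a $1/\varepsilon$-grid of p-values, each probe invoking the $\mathcal{O}(\log(1/\varepsilon)\log n_{max})$ computation of $\mutau$ from \Cref{lemma:participation-threshold}. Your additional discussion of nested approximation errors and the existence of $\alphahat$ goes a bit beyond what the paper spells out, but these are refinements rather than a different route.
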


\section{Experimental Studies}\label{sec:experiments}
\begin{figure}[h]
    \begin{minipage}{0.5\textwidth}
        \caption*{$\fnPart$ losses for different distributions}
        \includegraphics[width=1.0\linewidth]{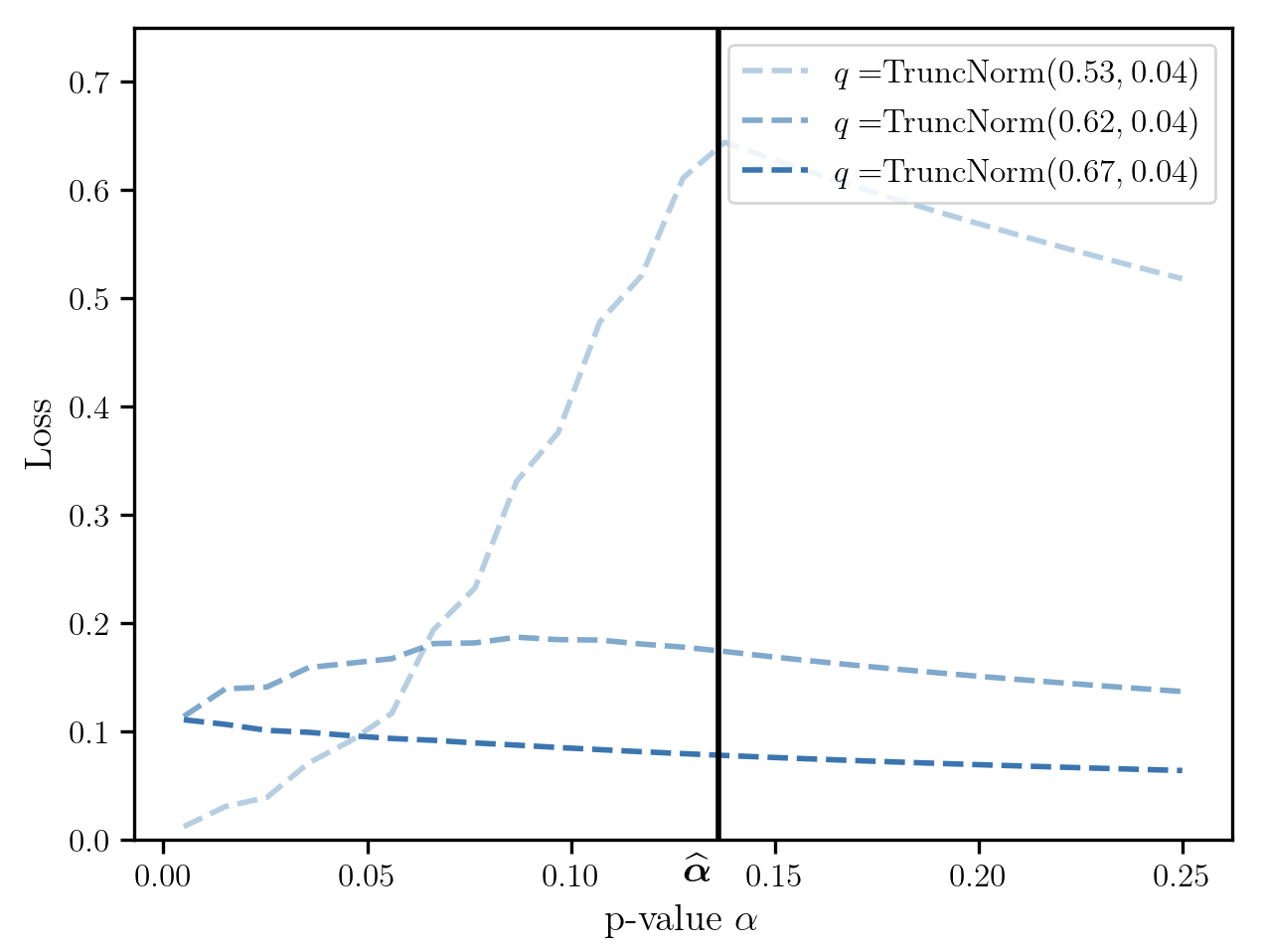}
    \end{minipage}%
    \hfill
    \begin{minipage}{0.5\textwidth}
        \caption*{Loss components for $q=\text{TrucNorm}(0.62, 0.04)$}
        \includegraphics[width=1.0\linewidth]{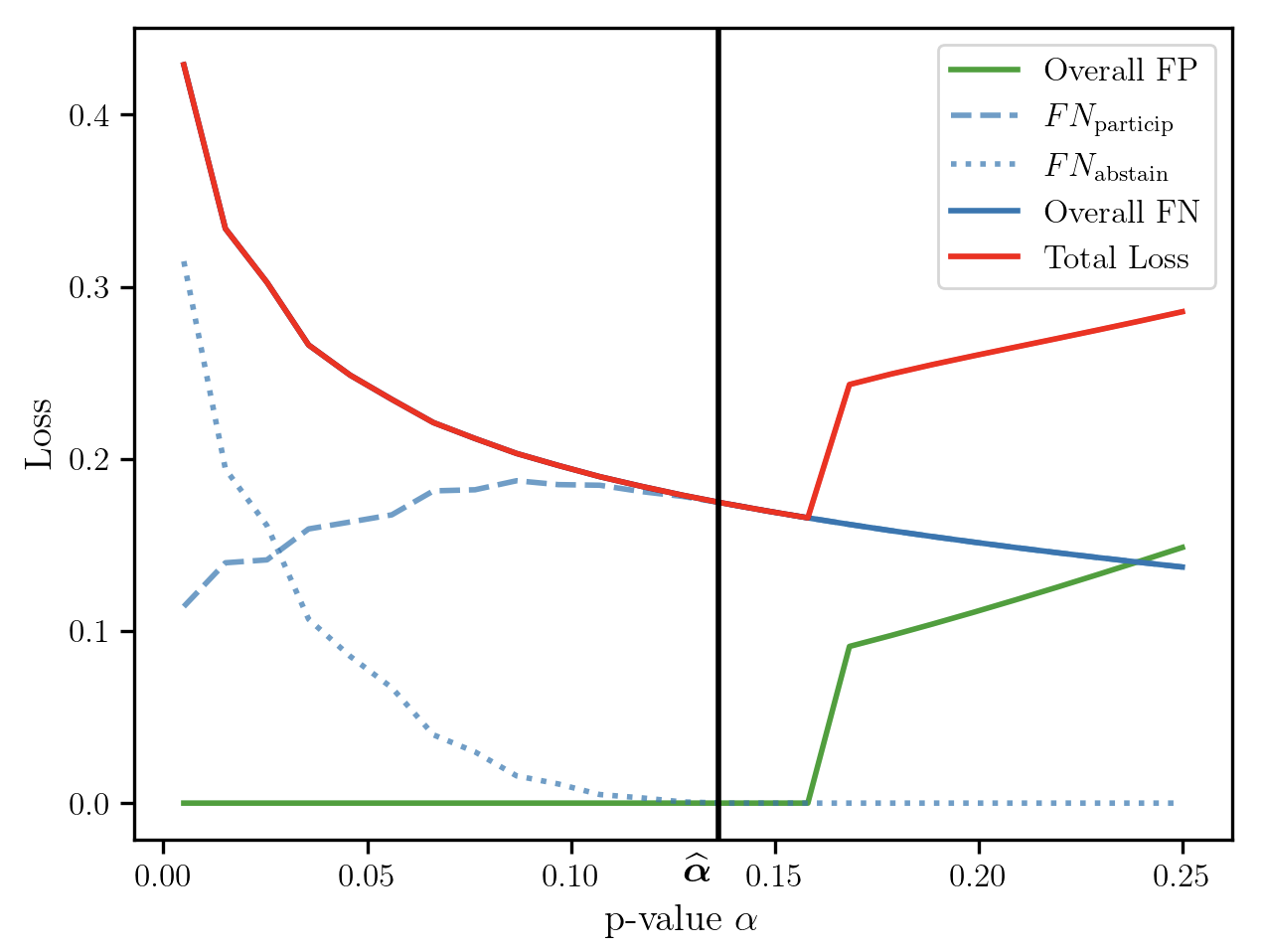}
    \end{minipage}
    \caption{On the left, we plot the $\fnPart$ losses for different effectiveness distributions $q$. On the right, we plot all the different loss components for one of these distributions with $\lambda_{fp} = \lambda_{fn} = 1$}\label{fig:loss_components}
\end{figure}

\subsection{False Negative Rate Among Participating Agents}
\label{sec:fn-participation}
Our technical analysis in the preceding section accounts for all components of the loss except one: $\fnPart$ for $\alpha \leq \alphahat$. For a principal unaware of the strategic implications of their chosen significance level ($\alpha$)—particularly that it may discourage participation by agents with above-baseline effectiveness—this may be the only false negative they observe. At first glance, it seems increasing $\alpha$ should reduce this: the test becomes easier to pass, so effective agents should be more likely to succeed. Theorem \ref{thm:total-FN-loss-alpha} also aligns with this intuition since the overall FN rate decreases with $\alpha$.

However, $\fnPart$ behaves more intricately. We highlight this with an example. Consider \(\mu_b = 0.5\) and a truncated normal distribution \(q\) between 0.4 and 0.7 with standard deviation 0.04. First, suppose the mean of \(q\) is 0.53, meaning a large mass of effective agents are only slightly better than baseline. As \(\alpha\) increases from \(\alpha_1\) to \(\alpha_2\), the participation threshold falls (by \Cref{lemma:monotonicity-participation-decision}), causing a large mass of these slightly-effective agents to switch from abstaining to participating. However, due to their small effect size and samples being costly, these new participants pass with a lower probability than those who also participated under $\alpha_1$. Their inclusion in the FN metric at $\alpha_2$ outweighs increased pass probability among previously participating agents, increasing the FN rate under \(\alpha_2\).

This effect is not monotonic and depends on the distribution. If \(q\)'s mean rises to 0.62, the same initial increase in FN rate occurs in the small $\alpha$ regime. For higher values of $\alpha$, however, most effective agents are already participating, and few agents remain that are both slightly better than baseline and could switch; as $\alpha$ increases, the effect is dominated by the increasing pass probability of those already participating, decreasing the $\fnPart$. As shown in Figure \ref{fig:loss_components}, $\fnPart$ increases and then decreases. In selecting a higher mean ($0.67$), $\fnPart$ is always decreasing. Overall, this rich behavior is fundamentally due to participation and evidence set size being a strategic decision. 

\subsection{Analysis of $\alphahat$ in Drug Approvals}
We now conduct a case study of our strategic hypothesis testing model by considering our running example: drug approvals. Despite much of the data in this setting being proprietary, we use public sources to capture relevant metrics for three classes of drugs: oncology, vaccines, and cardiovascular. \citet{prorelix2025} mentions the per-participant expense $(c)$ of vaccine testing to be $\sim\$50,000$, while oncology and cardiovascular trials being around $\sim\$128,000$ and $\sim\$136,000$ respectively. The fixed expenditure $(c_0)$ for clinical trials, regulatory approvals, and R\&D, as well as the lifetime revenue $(R)$ vary widely, even within drug categories. \citet{prasad2017} highlights the median fixed expenditure to be around \$650 million for oncology drugs, although in the extremes it can be as low as \$150 million or higher than a billion. The median four year revenue to be around 1.6 Billion; data on lifetime revenue is limited, but \cite{terry2019projected, rashid2024cancer} suggest it can be around 10-15 billion, with blockbuster drugs above 50 billion~\cite{bates2022principalagent}. Analysis from \citet{bhatt2024imperative} gives ranges of \$74 - \$183 million for fixed costs of Cardiovascular drugs with a median of \$141 million. \citet{rashid2024top15cvdrugs} suggests the corresponding revenue of approved drugs here to be between under a billion to over 10 billion, with a median of around \$3.5 billion. For vaccines, \citet{Sertkaya2016} outlines revenues between 6.9 billion to 36.9 billion for blockbusters, with median fixed costs around \$886 million. In the extremes, it can be below \$100 million or above a billion. We present this rough data in~\Cref{table:cost-revenue-table}.

\begin{table}[h!]
\label{table:cost-revenue-table}
\centering
\begin{tabular}{l|c|c|c}
\toprule
Drug Category & 	Revenue if approved (R) &  Fixed Cost ($c_0$) & Cost Per Sample ($c$)\\
\midrule
Oncology & $[1,500 - 50,000]$ & $(648); [150 - 1,000]$ & 0.136 \\
Cardiovascular & (3,560); $[1,000 - 10,000]$ & $(141); [74- 183]$ & 0.128 \\
Vaccine & $[6,900 - 36,900]$ & $(886); [100 - 1,000]$ & 0.05\\
\bottomrule
\end{tabular}
\caption{Rough costs and revenue by drug category. The median value, where available, is presented in parentheses. All numbers in millions USD, for US-based development.}
\end{table}

\begin{figure}[h]
    \begin{minipage}{0.5\textwidth}
        \caption{$\alphahat$ vs Revenue - Oncology}
        \includegraphics[width=0.98\linewidth]{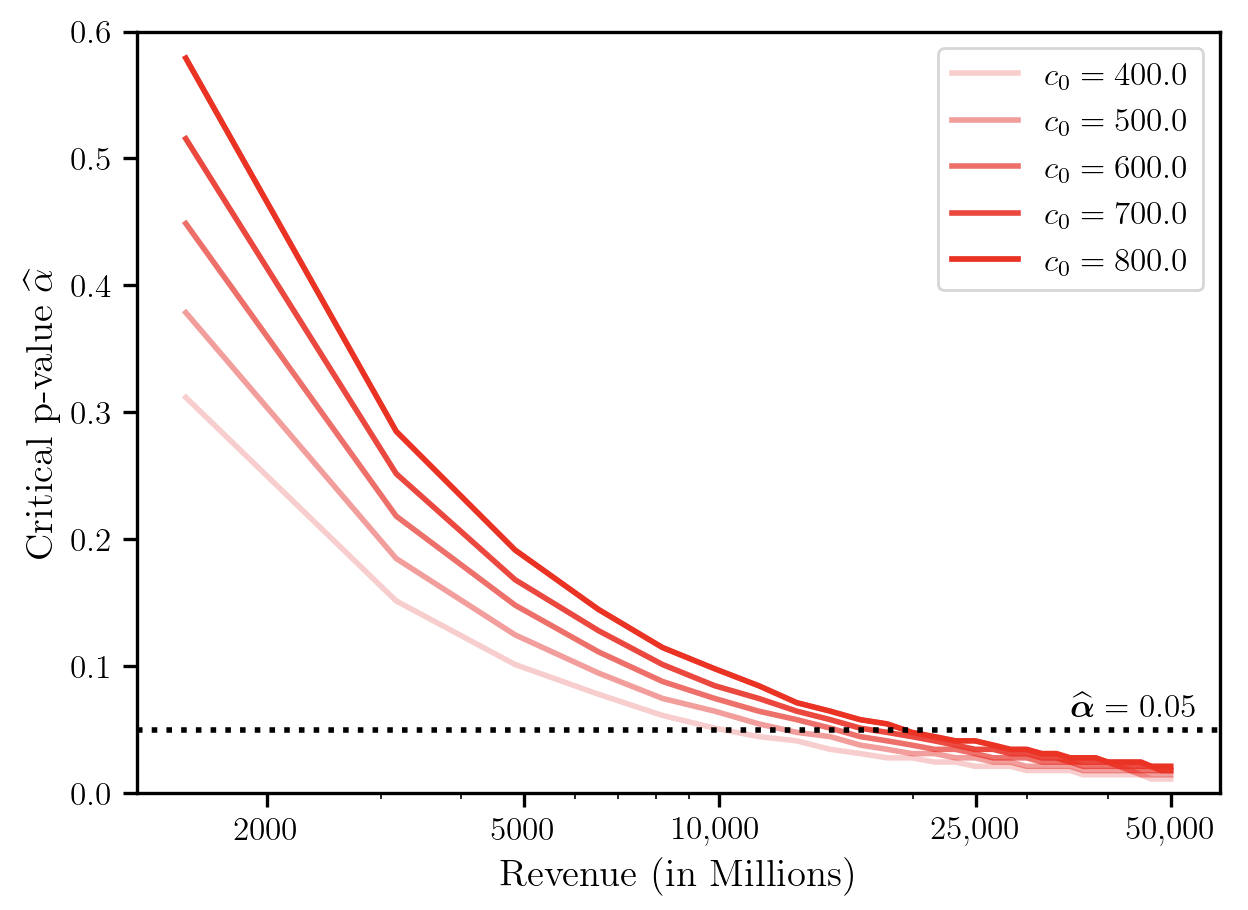}
        \label{fig:oncology-alpha}
    \end{minipage}%
    \hfill
    \begin{minipage}{0.5\textwidth}
        \caption{$\alphahat$ vs Revenue - Cardiovascular}
        \includegraphics[width=0.98\linewidth]{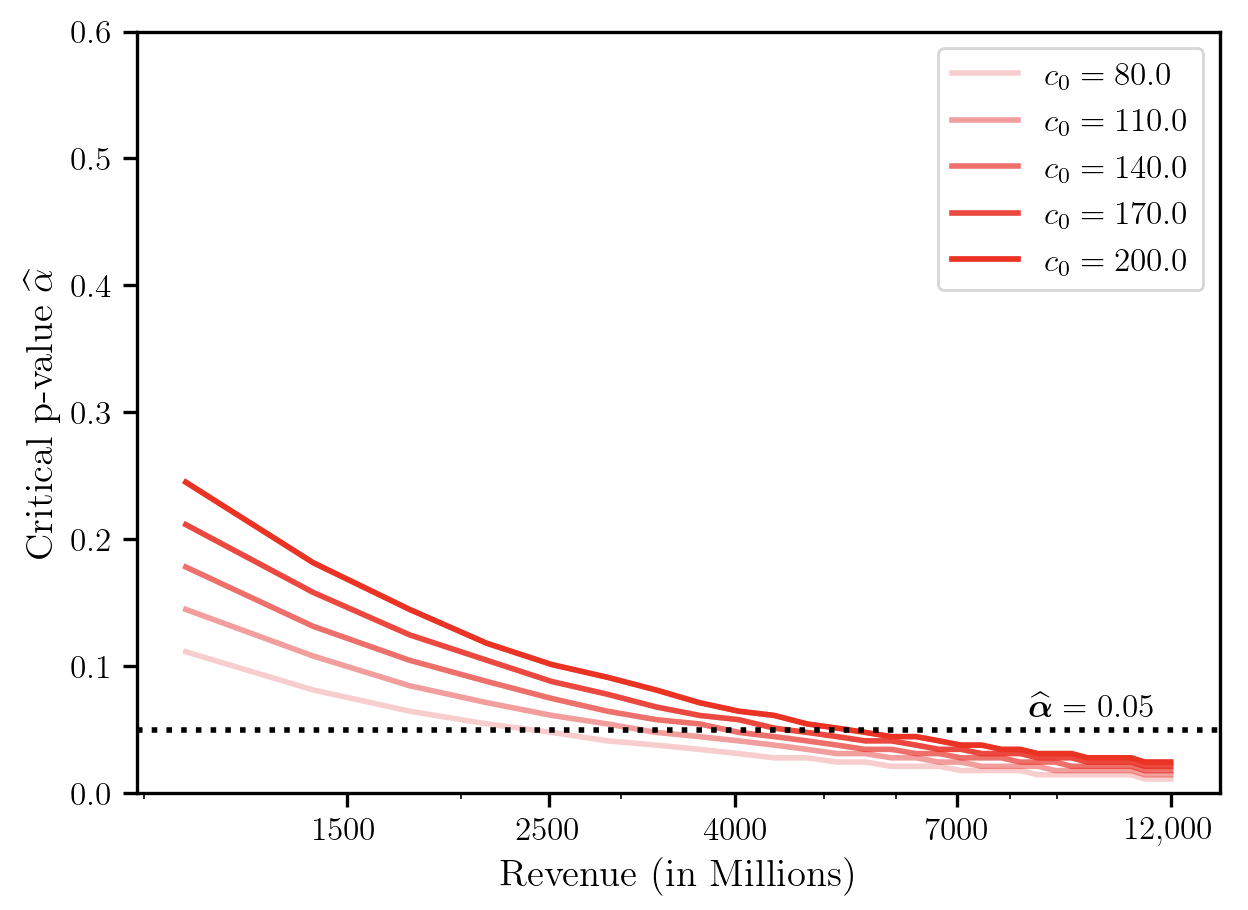}
        \label{fig:cardiovascular-alpha}
    \end{minipage}
\end{figure}

Our goal is to understand what $\alphahat$, the critical \pval that weakly dominates any lower \pval, looks like in the drug approval setting. Due to the variability of fixed costs $c_0$ and revenue $R$ as compared to per-sample-cost $c$, we fix $c$ and plot $\alphahat$ for different revenue and fixed costs. These ranges are centered around the median values (where available) gathered in the table. We plot the figures for oncology and cardiovascular categories above (see \Cref{fig:oncology-alpha} and \ref{fig:cardiovascular-alpha}); the vaccines figure is in Appendix \ref{appendix:experiments})\footnote{The experiments were conducted using a MacBook with only CPU resources and the NumPy package~\citep{harris2020array}.}. Each plot also displays the $\alpha = 0.05$ boundary, a commonly used \pval threshold by the FDA~\cite{kennedy2017alpha}. 

The results suggest that our strategic model roughly captures the economic dynamics of drug approvals. The in-practice used \pval of $0.05$ is the ideal choice at around the median revenue for each drug category. Within our leader-follower dynamic, the FDA is committing to this threshold, first suggesting that pharmaceuticals are choosing their parameters (samples collected, price, etc) to ensure profitability. Conversely, if regulatory bodies wish for decreased drug prices with the fixed costs being immutable, they ought to specify a higher p-value threshold, as the current value may dissuade effective but low-revenue drugs. In oncology, for instance, the critical threshold $\alphahat$ for around \$5 billion revenue is between 0.1 and 0.2. Given our analysis, such a value does not induce any more false positives than more stringent ones (\Cref{theorem:fp_dynamics}), but lowers the false negative rate. However, one may argue that choosing a higher p-value would imply that those with much higher revenue (say 25 billion) will be approved even when they are ineffective since their $\alphahat$ is lower. Practically, this is unlikely to happen since it is rare that an ineffective drug, albeit FDA-approved, would achieve such blockbuster revenue, as the market and post-market surveillance would generally expose inefficacy. Conversely, choosing p-values stricter than $\alphahat$ hamstrings low-revenue but effective drugs; this is a real concern for orphan drugs or low-cost therapeutics where margins are much slimmer.

\bibliographystyle{unsrtnat}
\bibliography{bibliography}
\newpage

\appendix

\section{Notation Table}
\label{secA:notation}
\begin{table}[!ht]
    \centering
    \begin{tabular}{c l}
        \toprule
            Symbol & Usage \\
        \midrule
            $X\subset \{0, 1\}$  &  Bernoulli random variable indicating \\ & whether a product was effective on a random instance \\
            $q$ & PDF of the effectiveness distribution \\
            $Q$ & CDF of the effectiveness distribution $Q$. \\
            $\mu_0$ & product effectiveness \\
            $\mu_b$ & baseline effectiveness \\
            $\sigma_0$ & variance of a product's effectiveness: $\sigma_0 = \sqrt{(1 - \mu_0) \mu_0}$ \\
            $\sigma_b$ & variance of a baseline product's effectiveness: $\sigma_b = \sqrt{(1 - \mu_b) \mu_b}$ \\
            $c_0$ & fixed cost of running a clinical trial\\
            $c$ & per-sample cost \\
            $R$ & revenue upon approval \\
            $\alpha$ & p-value threshold used by the principal\\
            $p(\mu_0, n; \mu_b)$ & the probability of observing outcomes at least  \\ & as good as the evidence conditioned on the null hypothesis $\mu_b$\\
            $z_{\alpha, n}$ & The critical region given $n$ samples and p-value $\alpha$\\
            $\PassProb (\alpha, \mu_0, n)$ & the probability that a product with effectiveness $\mu_0$ is approved\\
            $u(\alpha,\mu_0,n)$ & utility for an agent with effective size $\mu_0$ and number of samples $n$ \\
            $\nmu(\alpha)$ & optimal number of samples given p-value threshold $\alpha$ and effectiveness $\mu_0$ \\
            $\mutau(\alpha)$ & The participation threshold belief under a \pval~threshold $\alpha$\\
            $\alpha^*$ & The critical \pval threshold such that $\mu_\tau(\alpha^*) = \mu_b$\\
        \bottomrule
    \end{tabular}
    \vspace{0.5em}
    \caption{Primary Notation}
    \label{tab:notation_table}
  \end{table}

\newpage

\section{Omitted proofs in \Cref{sec:agent-best-response}}\label{appendix:best_response}
\paragraph{Proof of Theorem \ref{thrm:best_response}}
\begin{proof}
    We first define the following variables for brevity. Let $\Delta_{\mu} = \mu_0 - \mu_b$ denote the effect size, $\sigma_0 = \sqrt{\mu_0(1-\mu_0)}$ denote the standard deviation of the true process and $\sigma_b = \sqrt{\mu_b(1-\mu_b)}$ the standard deviation of the baseline Bernoulli variable. We use $\phi(z)$ to denote the standard normal, and $\Phi(z)$ to denote its CDF. Lastly, let $d_{\alpha} = \Phi^{-1}(1-\alpha)$ denote the $1 - \alpha$ percentile of the standard normal. Recall that we work under the normal approximation of the binomial distribution where the critical threshold is:
    \begin{equation}
        z'_{\alpha, n} = \left\{k \in \mathbb{R} \bigg| \int_{k}^{\infty}{\N(n\mu_b, n\mu_b(1-\mu_b))} \right\} = n\mu_b + \Phi^{-1}(1-\alpha)\sqrt{n\mu_b(1-\mu_b)} = \mu_b n + \da \sigma_b \sqrt{n}
    \end{equation}
    Thus, we may write the pass probability as follows, from which the expected utility expression follows:
    \begin{gather*}
        \PassProb(\cdot) = 1 - \Phi\left(\frac{z'_{n, \alpha} - n\mu_0}{\sqrt{n\mu_0(1-\mu_0)}}\right) 
        = 1 - \Phi\left(\frac{n\mu_b + \da \sqrt{n\mu_b(1-\mu_b)} - n\mu_0}{\sqrt{n\mu_0(1-\mu_0)}}\right) \\
        = 1 - \Phi\left(\frac{\da \sigma_b}{\sigma_0} - \frac{\Delta_{\mu}\sqrt{n}}{\sigma_0}\right) \\
        \implies u(n; \alpha, \mu_0, \mu_b) = R - R\Phi\left(\frac{\da \sigma_b}{\sigma_0} - \frac{\Delta_{\mu}\sqrt{n}}{\sigma_0}\right) - cn - c_0
    \end{gather*}
    Observe that when the drug is not effective, i.e. $\mu_0 < \mu_b$, then the argument to $\Phi()$ increases in $n$. Hence, the passing probability, and thus the expected revenue, decreases with increasing $n$. It is evident that the cost also increases in $n$. Thus, when the underlying drug is not effective, it is optimal for the agent to choose the smallest value $n_{min}$ possible. Computing the pass probability oracle with $n_{min}$, they can determine their maximum expected utility and choose to participate if this is positive. In other words, when $\mu_0 < \mu_b$, a constant amount of computation suffices to determine the optimal participation decision.

    We next turn to the more interesting case where $\mu_0 \geq \mu_b$. Our goal is to efficiently search the sample space by leveraging structural properties of the utility function. Letting $v = \frac{\da \sigma_b}{\sigma_0} - \frac{\Delta_\mu \sqrt{n}}{\sigma_0}$, the derivative of the utility function with respect to $n$ is as follows:
    \begin{align}
        \frac{\partial u}{\partial n} &= -R \frac{\partial \Phi}{\partial v} \frac{\partial v}{\partial n} - c \\
        &= \phi(v) \frac{R \Delta_\mu}{2 \sigma_0 \sqrt{n}} - c = \frac{R \Delta_\mu}{2 \sigma_0 \sqrt{n}} \phi\left( \frac{\da \sigma_b}{\sigma_0} - \frac{\Delta_\mu \sqrt{n}}{\sigma_0}\right) - c
    \end{align}
    The first term is always positive, and since the image of $\phi()$ is always above $0$. As $n$ increases, this first term tends to 0, and the derivative is dominated by the second term and becomes negative. In other words, increasing $n$ only increases utility to a point. To precisely understand this characteristic, consider the second derivative of the utility function:
    \begin{align}
        \frac{\partial^2 u}{\partial n^2} = \frac{-R\Delta_\mu}{2\sigma_0 n^{3/2}}\phi(v) + \frac{R\Delta^2_\mu}{2\sigma^2_0 n} v \phi(v) = \frac{R \phi(v) \Delta_\mu}{2 \sigma_0 n^{3/2}}\left[\frac{\Delta_\mu v}{2\sigma_0} \sqrt{n} - 1\right]
    \end{align}
   The sign of the second derivative, and thus the concavity/convexity properties of the utility function depend solely on $\left[\frac{\Delta_\mu v}{2\sigma_0} \sqrt{n} - 1\right]$ since the term multiplying it is always positive. We now expand it by plugging in the definition of $v$:
    \begin{equation}
        \left[\frac{\Delta_\mu v}{2\sigma_0} \sqrt{n} - 1\right] = \frac{-\Delta_\mu}{2 \sigma_0^2} n + \frac{\da \Delta_\mu \sigma_b}{2 \sigma_0^2} \sqrt{n} - 1 = \frac{-\Delta_\mu}{2 \sigma_0^2} t^2 + \frac{\da \Delta_\mu \sigma_b}{2 \sigma_0^2} t - 1
    \end{equation}
    where we substitute in $t = \sqrt{n}$. Observe that this is a negative quadratic expression with only positive real root being meaningful (since $\sqrt{n}$ cannot be negative). The roots $n_1, n_2$ can easily be computed by applying the quadratic formula to the instance parameters. We categorize the outcome as follows:
    \squishenum
        \item No positive roots $\implies$ the function is always \emph{concave}
        \item One positive root at $n_1$ $\implies$ the function is \emph{convex} over $[0, n_1]$ and \emph{concave} over $[n_1, \infty)$.
        \item Two positive root at $n_1, n_2$ with $n_2 > n_1$ $\implies$ that over $[0, n_1]$ the function is \emph{concave}, over $[n_1, n_2]$ is is \emph{convex} and over $[n_2, \infty)$ it is \emph{concave}.
    \squishenumend

    We can consider the problem of optimizing $n$ over each of these convex/concave regions defined by the roots. Note that a convex function always attains its maximum on the boundary. Thus, for a the convex interval, it suffices to just check the boundary points, of which there are only two. At the optimal $n$ for a concave interval, the first derivative is always 0, or it is a boundary point. Since the latter case is similar to the first one, it suffices to find where the first derivative is 0, if it exists. Since the first derivative is always increasing (monotonic) and we need to find the $x$-intercept, a binary search suffices: starting in the middle of the interval, increase $n$ if the first derivative is positive, and decrease it if negative. This makes at most $\log n_{max}$ calls. It is immediate that taking max over the best $n$ from each of the at most three yields the globally optimal $\nmu(\alpha)$. The agent can compute the corresponding passing probability and utility and decide to participate if this is positive.
\end{proof}

\paragraph{Proof of \Cref{lemma:monotonicity-participation-decision}}

\begin{proof}
\label{proof-lemma:monotonicity-participation-decision}
    We start with the first direction. If an agent with prior belief $\mu_0$ participates with $n$ samples, then by individual rationality, they must have non-negative utility. That is,  $\Pr[p(n, \mu_0; \mu_b) \leq \alpha]\cdot R  \geq (c\cdot n + c_0)$, 
    where $(c\cdot n + c_0)$ is the cost. 
    Now consider an agent with a prior belief $\mu_1 \geq \mu_0$ and using the same number of samples-$n$. Observe that if $\Pr[p(n, \mu_1; \mu_b)) \leq \alpha] \geq Pr[p(n', \mu_0; \mu_b)) \leq \alpha]$, then the agent will participate under belief $\mu_1$. 
    To compute $\Pr[p(n, \mu_1;\mu_b) \leq \alpha]$, let $n\muhat = \sum_{i=1}^{n}{X_i}$ denote the observed outcomes of $n$ samples drawn independently from the distribution with effectiveness $\mu_1$. Observe that since the same number of samples are used here as when the belief was $\mu_0$, the critical region $z_{\alpha, n}$ does not change since it depends only on $\alpha, n$ and $\mu_b$. The probability that these samples, drawn with respect to belief $\mu_1$, will lie in this critical region:
    \begin{equation*}
        \sum_{i \in z_{\alpha, n}}{\binom{n}{i} \mu_1^i(1-\mu_1)^{n-i}} \geq \sum_{i \in z_{\alpha, n}}{\binom{n}{i} \mu_0^i(1-\mu_0)^{n-i}}
    \end{equation*}
    where the inequality follows immediately since $\mu_1 \geq \mu_0$. 
    
    For the reverse, consider an agent with effectiveness $\mu'_0$ not participating. This means that \emph{for all} $n$, we have $\Pr[p(n, \mu'_0; \mu_b)) \leq \alpha]\cdot R_0 < (c\cdot n + c_0)$. Then for an agent with belief $\mu'_1 < \mu'_0$ the following holds:
    \begin{equation*}
        \forall \, n  \sum_{i \in z_{\alpha, n}}{\binom{n}{i} {\mu'}_1^i(1-{\mu'}_1)^{n-i}} < \sum_{i \in z_{\alpha, n}}{\binom{n}{i} {\mu'}_0^i(1-{\mu'}_0)^{n-i}}
    \end{equation*}
    In other words, for each $n$, the passing probability under $\mu'_1$ is worse than $\mu'_0$, and the agent was already not participating under $\mu'_0$ for any $n$.
\end{proof}

\paragraph{Proof for \Cref{lemma:participation-threshold}}
\begin{proof}
\label{proof-prop:computation-threshold}
    We first consider solving for $\mutau(\alpha)$, given an $\alpha$. To solve this upto some accuracy $\varepsilon$, we can discretize the belief space into $k = \frac{1}{\varepsilon}$ intervals of size $\varepsilon$, and use the mid-point of the interval as its representative belief. The monotonicity of beliefs implies that we can run a binary search over these intervals. Starting with the middle interval $\frac{k}{2}$, if the agent participates at its representative belief, we need not search intervals larger than this. Similarly, if the agent does not participate, we need not search all intervals smaller than this. Clearly, this terminates in $\log\left(\frac{1}{\varepsilon}\right)$ calls to the pass probability oracle. To check participation, we need to compute the optimal $n$ at every belief. Assume the maximum number of sample is $n_{max}$, it will again take $\log(n_{max})$ to search for the optimal number of samples. So the total complexity should be $\log\left(\frac{1}{\varepsilon}\right) * \log(n_{max})$.
    
    Next, we show that $\mutau(\alpha)$ is non-increasing as $\alpha$ increases. For any $\alpha$, we know that at the corresponding threshold belief $\mutau({\alpha})$ and its corresponding optimal sample size $n_{\mutau}(\alpha) \triangleq n_{\tau}(\alpha)$, the utility is 0. In other words: $R\cdot\PassProb(\ntau({\alpha}), \mutau({\alpha}), \mu_b, \alpha) = c \ntau({\alpha}) + c_0$. It is known that for a fixed effect size and number of samples, the passing probability of a hypothesis test increases in $\alpha$. In other words, for two p-values $\alpha_1, \alpha_2$ where $\alpha_1 \leq \alpha_2$, the following holds:
    \begin{equation*}
        \PassProb(\alpha_1, \mutau({\alpha_1}), \ntau({\alpha_1})) \leq \PassProb(\alpha_2, \mutau({\alpha_2}), \ntau({\alpha_2})) \quad \text{and} \quad u(\ntau({\alpha_1}), \mutau({\alpha_1}), \alpha_2) \geq 0
    \end{equation*}

    Since the agent with effectiveness $\mutau({\alpha_1})$ will have non-negative utility when using $\ntau({\alpha_1})$ samples when the p-value is $\alpha_2$, this agent will participate at this p-value (but not necessarily using $\ntau({\alpha_1})$ samples as that may not be optimal). The monotonicity of participation (Lemma \ref{lemma:monotonicity-participation-decision}) implies that under $\alpha_2$, agents with effectiveness greater than $\mutau({\alpha_1})$ will also participation. Thus, the participation threshold under $\alpha_2$ by definition must be to the left of $\mutau({\alpha_1})$ - i.e. $\mutau({\alpha_2}) \leq \mutau({\alpha_1})$ as desired.
\end{proof}

\paragraph{Proof for \Cref{lemma:monotonicity-agent-utility}}
\begin{proof}
    \label{proof-cor:monotonicity-agent-utility}
    We prove by contradiction. Assume that there exists an $\alpha$ and $\mu_1 \geq \mu_0$ such that $u(\alpha, \mu_0, n_{\mu_0}(\alpha)) \geq u(\alpha, \mu_1, n_{\mu_1}(\alpha))$. 
    First note that if the agent is not participating in either, then the utility is always 0. If they participate in one and not the other, the monotonicity of participation ( \Cref{lemma:monotonicity-participation-decision}) means it must be under $\mu_1$ (giving positive utility), with $\mu_b$ being 0. Thus, the only situation where the initial claim could hold if the agent participates under both. We divide this into the following three cases:
    \squishlist
        \item $\mu_0 \leq \mu_1 \leq \mu_b$: since both $\mu_0$ and $\mu_1$ are less effective than the baseline drug, \Cref{thrm:best_response} implies that in both cases, $n_{min}$ samples are used, and in \Cref{lemma:monotonicity-participation-decision} we know that in such settings, for a fixed $n$, the pass probability increases in $\mu$. Thus the utility under $\mu_1$ cannot be lower than $\mu_0$.
        \item $\mu_0 \leq \mu_b \leq \mu_1$ and $\mu_b \leq \mu_0 \leq \mu_1$: We know from \Cref{lemma:monotonicity-participation-decision} for every fixed $n$, the pass probability in this regime increases in $\mu$. Thus, 
        $\forall n, u(\alpha, \mu_0, n) \leq u(\alpha, \mu_1, n)$. Since $n_{\mu_1}(\alpha)$ is the optimal number of samples for $\mu_1$, we have
        \begin{align*}
             u(\alpha, \mu_0, n_{\mu_1}(\alpha
             )) \leq u(\alpha, \mu_1, n_{\mu_1}(\alpha)) \leq u(\alpha, \mu_0, n_{\mu_0}(\alpha))
        \end{align*}
        The last step is according to the contradiction statement. However, this cannot be true since an $n_{\mu_1}(\alpha)$ could not be the optimal number of samples for $\mu_1$ if it led to a lower utility than $n_{\mu_0}(\alpha)$.
    \squishend
\end{proof}

\section{Omitted Proof for \Cref{sec:principal-loss}}\label{appendix:principal_loss}
For the ease of notation, let $Q$ denote the CDF of the effectiveness distribution $q$. Further, let $Q_{< \mu_b}$ and $Q_{\geq \mu_b}$ denote the CDF of the belief distribution $q$, conditioned on $\mu_0 < \mu_b$ and $\mu_0 \geq \mu_b$. 

\paragraph{Proof of \Cref{lemma:passing-prob-increase-in-alpha}}
\begin{proof}
We are interested in the total derivative of the pass probability with respect to $\alpha$ for any agent with effectiveness $\mu_0 \geq \mu_b$. Note that for this lemma, we consider the agent to always be participating. This total derivative can be expanded using the multi-variable chain rule as follows:
\begin{align}
\frac{d}{d\alpha} \PassProb(\alpha, \mu_0, n_{\mu_0}(\alpha))
= 
\underbrace{\frac{\partial \PassProb}{\partial \alpha}}_{\text{Direct effect}}
+ \underbrace{\frac{\partial \PassProb}{\partial n} \cdot \frac{d n_{\mu_0}(\alpha)}{d\alpha}}_{\text{indirect effect}}.    
\end{align}

To simplify notation, denote $w_\alpha = \Phi^{-1}(1 - \alpha)$, and let 
\begin{align*}
    \xi(n, \alpha) &= \frac{\Phi^{-1}(1 - \alpha) \sigma_b - \sqrt{n} (\mu_0 - \mu_b)}{\sigma_0} =  \frac{\sigma_b}{\sigma_0} w_\alpha - \frac{\sqrt{n}(\mu_0 - \mu_b)}{\sigma_0}
\end{align*}

Then we have: $\PassProb(\alpha, \mu_0, \nmu(\alpha)) = 1 - \Phi(\xi(\nmu(\alpha), \alpha))$. We now separate the analysis based on the direct and indirect influence.

\textbf{Direct effect: }The partial derivative with respect to $\alpha$ is:
\[
\frac{\partial \PassProb}{\partial \alpha} = - \frac{d}{d \alpha} \Phi(\xi(n, a)) = - \phi(\xi) \cdot \frac{\partial \xi}{\partial \alpha} = 
- \phi(\xi) \frac{\sigma_b}{\sigma_0}\frac{d w_\alpha}{d \alpha}
= \frac{\sigma_b}{\sigma_0} \frac{\phi(\xi)}{\phi(z_\alpha)} \geq 0
\]
since a higher $\alpha$ lowers the critical threshold $w_\alpha$ and makes the test easier to pass.

\textbf{Indirect effect:} The indirect effect depends on the optimal number of samples the agent uses for a given $\alpha$. That is, while having more samples increases the chance of passing $(\frac{\partial \PassProb}{\partial n} > 0)$, the agent might reduce this effort when $\alpha$ increases (since the test becomes easier). The product of these two terms is therefore unclear. Formally,
\begin{align*}
\frac{\partial \PassProb}{\partial n} = - \frac{d}{dn}\Phi(\xi(n, \alpha)) = - \phi(\xi)\frac{\partial \xi}{\partial n} = \phi(\xi) \cdot \frac{\mu_0 - \mu_b}{2 \sigma_0 \sqrt{n}} >0 
\end{align*}

To compute $\frac{d \nmu(\alpha)}{d \alpha}$, let $F(n, \alpha)$ be the first-order condition of the agent's utility function: $u(\alpha, \mu_0, n) = R\PassProb(\alpha, \mu_0, n) - cn - c_0$. Since we know $\nmu(\alpha)$ is the optimal number of samples, we claim that:
\begin{align}
\label{eq:gradient-optimal-n}
    F(\alpha, \nmu(\alpha)) = R \cdot \frac{\partial \PassProb}{\partial n}\bigg|_{n = \nmu(\alpha)} - c = 0.
\end{align}
This holds because, as we show immediately below, the second derivative of the $\PassProb(\cdot)$ function is always strictly negative with respect to $n$, meaning the utility function is always strictly concave. This is formalized below:

\begin{lemma}
\label{lemma:implicit-function-theorem-condition}
The second derivative of $\PassProb$ with respect to $n$ is always negative -- $\frac{\partial^2 \PassProb}{\partial n^2} < 0$.
\end{lemma}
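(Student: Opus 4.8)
The plan is to prove Lemma~\ref{lemma:implicit-function-theorem-condition} by a direct computation: differentiate the closed form $\PassProb(\alpha,\mu_0,n) = 1 - \Phi\bigl(\xi(n,\alpha)\bigr)$ twice in $n$ and reduce the sign of the result to the positivity of an explicit quadratic. Throughout write $\xi = \xi(n,\alpha) = \frac{\sigma_b\,\Phi^{-1}(1-\alpha) - \sqrt n\,(\mu_0-\mu_b)}{\sigma_0}$, so that $\xi$ is affine in $\sqrt n$ with $\frac{\partial \xi}{\partial n} = -\frac{\mu_0-\mu_b}{2\sigma_0\sqrt n}$; recall we are in the regime $\mu_0 \ge \mu_b$, so $\mu_0-\mu_b \ge 0$.

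First I would differentiate once. Since $\PassProb = 1 - \Phi(\xi)$, the chain rule gives $\frac{\partial\PassProb}{\partial n} = -\phi(\xi)\frac{\partial\xi}{\partial n} = \frac{(\mu_0-\mu_b)\,\phi(\xi)}{2\sigma_0\sqrt n} \ge 0$, which is exactly the expression already used in the proof of Lemma~\ref{lemma:passing-prob-increase-in-alpha}. Differentiating a second time, using $\phi'(z) = -z\,\phi(z)$ together with $\frac{\partial \xi}{\partial n} = -\frac{\mu_0-\mu_b}{2\sigma_0\sqrt n}$ and $\frac{d}{dn}n^{-1/2} = -\tfrac12 n^{-3/2}$, collapses after simplification to
\[
\frac{\partial^2 \PassProb}{\partial n^2} \;=\; \frac{(\mu_0-\mu_b)\,\phi(\xi)}{4\,\sigma_0\,n^{3/2}}\left(\frac{(\mu_0-\mu_b)\,\xi\,\sqrt n}{\sigma_0} - 1\right).
\]
The prefactor $\frac{(\mu_0-\mu_b)\phi(\xi)}{4\sigma_0 n^{3/2}}$ is non-negative, so it suffices to show the bracketed term is negative.

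To control the bracket, substitute the definition of $\xi$ and set $s := (\mu_0-\mu_b)\sqrt n \ge 0$; then $\frac{(\mu_0-\mu_b)\,\xi\,\sqrt n}{\sigma_0} = \frac{s\,(\sigma_b\Phi^{-1}(1-\alpha) - s)}{\sigma_0^2}$, and the bracket equals $-\frac{1}{\sigma_0^2}\bigl(s^2 - \sigma_b\Phi^{-1}(1-\alpha)\,s + \sigma_0^2\bigr)$. Hence $\frac{\partial^2\PassProb}{\partial n^2} < 0$ precisely when the quadratic $s \mapsto s^2 - \sigma_b\Phi^{-1}(1-\alpha)\,s + \sigma_0^2$ is strictly positive on $s \ge 0$. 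Its global minimum over $s$ is $\sigma_0^2 - \tfrac14\sigma_b^2\bigl(\Phi^{-1}(1-\alpha)\bigr)^2$, so I would conclude via the bound $\sigma_b\,\Phi^{-1}(1-\alpha) < 2\sigma_0$; in the degenerate case $\mu_0 = \mu_b$ the second derivative is $0$, so strict negativity uses $\mu_0 > \mu_b$.

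The crux — and the only non-routine step — is this last inequality: the closed-form second derivative is not sign-definite for free, since the quadratic $s^2 - \sigma_b\Phi^{-1}(1-\alpha)\,s + \sigma_0^2$ dips below zero on $s\ge 0$ exactly when $\sigma_b\Phi^{-1}(1-\alpha) \ge 2\sigma_0$ (a very stringent $\alpha$ combined with $\sigma_0$ small, i.e.\ $\mu_0$ near $0$ or $1$). For $\alpha \ge \tfrac12$ the claim is immediate, since then $\Phi^{-1}(1-\alpha)\le 0$ forces $\xi < 0$ and hence the bracket $< 0$ outright; for the moderate significance levels relevant here (e.g.\ $\Phi^{-1}(0.95)\approx 1.64$) the bound $\sigma_b\Phi^{-1}(1-\alpha) < 2\sigma_0$ is what must be carried — either as a mild regularity condition on $\alpha$ relative to the effect-size geometry, or by restricting attention to the concave sub-interval of the sample space already isolated in the proof of Theorem~\ref{thrm:best_response}. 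Everything else reduces to the two differentiations above and elementary algebra.
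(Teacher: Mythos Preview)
Your direct–computation approach is the same as the paper's, and your displayed second derivative
\[
\frac{\partial^2 \PassProb}{\partial n^2}
=\frac{(\mu_0-\mu_b)\,\phi(\xi)}{4\,\sigma_0\,n^{3/2}}\left(\frac{(\mu_0-\mu_b)\,\xi\,\sqrt{n}}{\sigma_0}-1\right)
\]
is correct. More to the point, you have put your finger on a genuine flaw in the paper's own proof of this lemma. The paper's displayed formula carries a sign error in the $\xi$ term (it writes $-\xi$ where it should be $+\xi$); with the correct sign---which the paper itself uses a few lines later in the proof of Lemma~\ref{lemma:condition-optimal-n(alpha)}---the bracketed factor is \emph{not} globally negative: exactly your quadratic $s^2-\sigma_b\Phi^{-1}(1-\alpha)\,s+\sigma_0^2$ in $s=(\mu_0-\mu_b)\sqrt{n}$ can dip below zero when $\sigma_b\Phi^{-1}(1-\alpha)\ge 2\sigma_0$. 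This is fully consistent with the proof of Theorem~\ref{thrm:best_response}, which explicitly partitions the sample-size axis into convex and concave intervals; a globally negative second derivative would make that partition vacuous.

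So the lemma as stated is too strong, and the paper's argument for it does not go through. What the surrounding proof of Lemma~\ref{lemma:passing-prob-increase-in-alpha} actually needs, and what the paper subsequently establishes as Lemma~\ref{lemma:condition-optimal-n(alpha)}, is only the weaker claim that $\frac{\partial^2\PassProb}{\partial n^2}\le 0$ \emph{at the interior optimum} $n=\nmu(\alpha)$---the second-order necessary condition for a maximizer. Your two proposed repairs (carry $\sigma_b\Phi^{-1}(1-\alpha)<2\sigma_0$ as a regularity condition, or restrict to the concave sub-interval from Theorem~\ref{thrm:best_response}) are both reasonable; the cleanest fix is simply to replace the lemma by its ``at $\nmu(\alpha)$'' version, which is all that is ever used downstream.
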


\begin{proof}
    To see this, 
\begin{align*}
\frac{\partial^2 \PassProb}{\partial n^2} 
&= \frac{d}{dn} \left( \phi(\xi) \cdot \frac{(\mu_0 - \mu_b)}{2 \sigma_0 \sqrt{n}} \right)= \phi(\xi) \cdot \left(
- \frac{\xi (\mu_0 - \mu_b)^2}{4 \sigma_0^2 n} 
- \frac{(\mu_0 - \mu_b)}{4 \sigma_0 n^{3/2}}
\right).    
\end{align*}

Since \( \phi(\xi) > 0 \), \( \mu_0 - \mu_b > 0 \), \( \sigma_0 > 0 \), and \( n > 0 \), and since \( \xi \) can be either sign, the two terms in the parentheses are both negative (even if \( \xi < 0 \), the negative sign in front ensures negativity). Thus, the entire expression is strictly negative: $\frac{\partial^2 \PassProb}{\partial n^2} < 0$. Intuitively this means that every new sample increases the chance of passing, but each one helps less than the last. 
\end{proof}

The result above also means that the first derivative of $F( \alpha, n)$ is always non-zero at $\nmu(\alpha)$. It is also evident that $F(\alpha, n)$ is continuously differentiable. Thus, we can apply the Implicit Function Theorem. Differentiating both sides of $F(\alpha, \nmu(\alpha)) = 0$ with respect to $\alpha$ yields:
\begin{align}
\frac{\partial F}{\partial n}(\alpha, \nmu(\alpha)) \cdot \frac{d \nmu(\alpha)}{d \alpha}
+ \frac{\partial F}{\partial \alpha}(\alpha, \nmu(\alpha)) = 0 \implies \frac{d \nmu(\alpha)}{d \alpha}
= -\frac{\frac{\partial F}{\partial \alpha}}{\frac{\partial F}{\partial n}}
\end{align}


Next we compute $\frac{\partial F}{\partial \alpha}$ and $\frac{\partial F}{\partial n}$ accordingly. Observe that:
\[
\frac{\partial F}{\partial \alpha}
= R_0 \cdot \frac{\partial^2 \PassProb}{\partial n \, \partial \alpha}, \quad \text{where }\,\, \frac{\partial^2 \PassProb}{\partial n \, \partial \alpha}
= \frac{\mu_0 - \mu_b}{2 \sigma_0 \sqrt{n}} \cdot \frac{\partial \phi(\xi)}{\partial \alpha}.
\]

Since:
\[
\frac{\partial \phi(\xi)}{\partial \alpha} 
= \frac{d \phi(\xi)}{d \xi} \cdot \frac{\partial \xi}{\partial \alpha}
= -\xi \phi(\xi) \cdot \frac{\partial \xi}{\partial \alpha}= \xi \phi(\xi) \cdot \frac{\sigma_b}{\sigma_0}\frac{1}{\phi(w_\alpha)},
\]
we have
\[
\frac{\partial F}{\partial \alpha}
= R_0 \cdot \frac{\mu_0 - \mu_b}{2 \sigma_0 \sqrt{n}} \cdot \xi \phi(\xi) \cdot \frac{1}{\phi(z_\alpha)} \frac{\sigma_b}{\sigma_0}.
\]

We next turn to computing the derivative of $F$ with respect to $n$. Observe the following (where we plug in $\frac{\partial^2 \PassProb}{\partial n^2}$ from above):
\[
\frac{\partial F}{\partial n}
= R_0 \cdot \frac{\partial^2 \PassProb}{\partial n^2} = R_0 \cdot \left( \frac{\xi \phi(\xi) (\mu_0 - \mu_b)^2}{4 \sigma_0^2 n}
- \frac{\phi(\xi) (\mu_0 - \mu_b)}{4 \sigma_0 n^{3/2}} \right).
\]

Now plugging them back into the expression computed by the Implicit function theorem, we have:
\begin{align*}
\frac{d \nmu(\alpha)}{d \alpha}
= -\frac{\frac{\partial F}{\partial \alpha}}{\frac{\partial F}{\partial n}}\Big|_{n = \nmu(\alpha)}
=& -\frac{\frac{(\mu_0 - \mu_b) \, \xi \, \phi(\xi)}{2 \sigma_0 \sqrt{\nmu(\alpha)} \, \phi(w_\alpha)}\frac{\sigma_b}{\sigma_0}}{\frac{\xi \phi(\xi) (\mu_0 - \mu_b)^2}{4 \sigma_0^2 \nmu(\alpha)}
- \frac{\phi(\xi) (\mu_0 - \mu_b)}{4 \sigma_0 \nmu(\alpha)^{3/2}} } = -\frac{\frac{2 \xi \, \sqrt{\nmu(\alpha)} \sigma_b}{\phi(w_\alpha)}}
{\xi (\mu_0 - \mu_b) - \frac{ \sigma_0}{\sqrt{\nmu(\alpha)}} }.
\end{align*}

\paragraph{Total derivative.}Having computed the direct and indirect effects, we can directly compute the total derivative. We have:
\begin{align}
    \frac{d}{d \alpha} \PassProb(\alpha, \mu_0, \nmu(\alpha))
    &= \underbrace{\frac{\partial \PassProb}{\partial \alpha}}_{\text{Direct effect}}
+ \underbrace{\frac{\partial \PassProb}{\partial n}\bigg|_{n = \nmu(\alpha)} \cdot \frac{d \nmu(\alpha)}{d\alpha}}_{\text{indirect effect}}.    \\
&= \frac{\sigma_b}{\sigma_0} \frac{\phi(\xi)}{\phi(z_\alpha)}
+ \phi(\xi) \cdot \frac{\mu_0 - \mu_b}{2 \sigma_0 \sqrt{\nmu(\alpha)}} \cdot \frac{d \nmu(\alpha)}{d \alpha}\\
&=  \frac{\sigma_b}{\sigma_0}\frac{\phi(\xi)}{\phi(z_\alpha)} 
\left(1 - \frac{ (\mu_0 - \mu_b) \, \xi }{\left( \xi (\mu_0 - \mu_b) - \frac{ \sigma_0}{\sqrt{\nmu(\alpha)}} \right)} 
\right).    
\end{align}


The sign of $\left(1 - \frac{ (\mu_0 - \mu_b) \, \xi }{\left( \xi (\mu_0 - \mu_b) - \frac{ \sigma_0}{\sqrt{\nmu(\alpha)}} \right)} \right)$ decides the monotonicity of the passing probability. As we show below, when agent chooses the optimal number of samples $n = \nmu(\alpha)$, it will never be the case that $\xi (\mu_0 - \mu_b) - \frac{\sigma_0}{\sqrt{n(\alpha)}}>0$ (\Cref{lemma:condition-optimal-n(alpha)}), which implies that $\left(1 - \frac{ (\mu_0 - \mu_b) \, \xi }{\left( \xi (\mu_0 - \mu_b) - \frac{ \sigma_0}{\sqrt{\nmu(\alpha)}} \right)} \right) >0$ always holds at the optimal $\nmu(\alpha)$, thus the passing probability at the optimal number of samples $\nmu(\alpha)$ is always monotonically increasing. We finish the proof by proving \Cref{lemma:condition-optimal-n(alpha)}:
\begin{lemma}
\label{lemma:condition-optimal-n(alpha)}
 When \( \nmu(\alpha) \) is the optimal sample size chosen by the agent to maximize utility, then  $\xi (\mu_0 - \mu_b) - \dfrac{\sigma_0}{\sqrt{n(\alpha)}} > 0$ never holds.
\end{lemma}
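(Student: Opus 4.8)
\emph{Proof plan.} The approach is to recognize the claimed inequality as nothing more than the second-order necessary condition for $\nmu(\alpha)$ to be an optimal (interior) sample size, so that it follows from optimality together with a short derivative computation. Write $\Delta = \mu_0 - \mu_b > 0$, $\sigma_0 = \sqrt{\mu_0(1-\mu_0)}$ and $\da = \Phi^{-1}(1-\alpha)$, and recall from the proof of \Cref{thrm:best_response} that, viewed as a function of the sample size, the agent's utility is $u(\alpha,\mu_0,n) = R - R\,\Phi(v(n)) - cn - c_0$ with $v(n) = \tfrac{\da\sigma_b}{\sigma_0} - \tfrac{\Delta\sqrt{n}}{\sigma_0}$; observe that $v(n)$ is exactly the quantity $\xi$ appearing in the statement (and in the proof of \Cref{lemma:passing-prob-increase-in-alpha}).

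I would first (re)derive the first two derivatives of $u$ in $n$, taking care with signs. Using $\phi'(x) = -x\phi(x)$ and $v'(n) = -\tfrac{\Delta}{2\sigma_0\sqrt{n}}$ one gets
\[
\frac{\partial u}{\partial n} = \frac{R\Delta\,\phi(v)}{2\sigma_0\sqrt{n}} - c, \qquad \frac{\partial^2 u}{\partial n^2} = \frac{R\Delta\,\phi(v)}{4\sigma_0\,n^{3/2}}\left(\frac{v\,\Delta\,\sqrt{n}}{\sigma_0} - 1\right).
\]
Because $R,\Delta,\phi(v),n>0$, the leading factor $\tfrac{R\Delta\,\phi(v)}{4\sigma_0 n^{3/2}}$ is strictly positive, so the sign of $\partial^2 u/\partial n^2$ is determined entirely by the parenthesised term, yielding the key equivalence $\partial^2 u/\partial n^2 \le 0 \iff v(n)\Delta\sqrt{n} \le \sigma_0 \iff v(n)\Delta - \tfrac{\sigma_0}{\sqrt{n}} \le 0$.

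Finally I would invoke optimality: when $\nmu(\alpha)$ maximizes $u(\alpha,\mu_0,\cdot)$ and lies in the interior of the feasible range --- equivalently, it satisfies the first-order condition $\partial u/\partial n\big|_{\nmu(\alpha)}=0$ from \eqref{eq:gradient-optimal-n} used throughout the proof of \Cref{lemma:passing-prob-increase-in-alpha} --- the second-order necessary condition for a local (hence, here, global) maximum gives $\partial^2 u/\partial n^2\big|_{n=\nmu(\alpha)} \le 0$. By the equivalence above and $v = \xi$ this reads $\xi(\mu_0-\mu_b) - \tfrac{\sigma_0}{\sqrt{\nmu(\alpha)}} \le 0$, i.e.\ the strict inequality $\xi(\mu_0-\mu_b) - \tfrac{\sigma_0}{\sqrt{\nmu(\alpha)}} > 0$ never holds. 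The step I expect to need the most care is the boundary case, where the agent's optimum is pinned at $n_{min}$ or $n_{max}$ so that the second-order condition need not apply: I would handle it by noting that this lemma is only ever used in the interior regime --- the Implicit Function Theorem step of \Cref{lemma:passing-prob-increase-in-alpha} presupposes $F(\alpha,\nmu(\alpha))=0$ --- and that when the optimum remains on a constraint boundary over an interval of $\alpha$ the indirect term $\tfrac{d\nmu(\alpha)}{d\alpha}$ vanishes there, so the monotonicity conclusion of \Cref{lemma:passing-prob-increase-in-alpha} is supplied by its non-negative direct effect alone. A lesser point is simply to get the numerical constants in $\partial^2 u/\partial n^2$ exactly right; only the sign-determining factor $\big(\tfrac{v\Delta\sqrt{n}}{\sigma_0}-1\big)$ enters the argument.
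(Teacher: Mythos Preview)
Your proposal is correct and follows essentially the same route as the paper: compute $\partial^2 u/\partial n^2$, observe that its sign is governed exactly by the bracketed factor $\xi\Delta - \sigma_0/\sqrt{n}$, and then invoke the second-order necessary condition at the maximizer $\nmu(\alpha)$ to conclude that this factor is nonpositive. Your derivative computation is in fact cleaner than the paper's (which has a stray factor of $2$ in the analogous factorization in the proof of \Cref{thrm:best_response}), and your explicit treatment of the boundary case---noting that the lemma is only invoked in the interior regime where the first-order condition \eqref{eq:gradient-optimal-n} holds, and that on a constraint boundary the indirect effect vanishes anyway---is a genuine improvement over the paper, which silently assumes an interior optimum.
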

\begin{proof}
Recall the second derivative of the utility w.r.t $n$ is:
\begin{align*}
  &  \frac{\partial^2 \text{Utility}(n, \mu_0; \mu_b, \alpha)}{\partial n^2} = \frac{\partial^2 \PassProb}{\partial n^2} \\
 &= \frac{\xi \phi(\xi) (\mu_0 - \mu_b)^2}{4 \sigma_0^2 n}- \frac{\phi(\xi) (\mu_0 - \mu_b)}{4 \sigma_0 n^{3/2}}\\
    &= \frac{\phi(\xi) (\mu_0 - \mu_b)}{4 \sigma_0^2 n} ({\xi (\mu_0 - \mu_b)} - \frac{\sigma_0}{\sqrt{n}})
\end{align*}
At the optimal number of sample $n = \nmu(\alpha)$, the second derivative of the utility must be negative, otherwise it implies that increasing the number of samples will increase the utility, which leads to a contradiction to the fact that $n = \nmu(\alpha)$ is an optimal number of samples, thus we have the following always holds at $n = \nmu(\alpha)$: ${\xi (\mu_0 - \mu_b)} - \frac{\sigma_0}{\sqrt{\nmu(\alpha)}}\leq 0$.
\end{proof}
\end{proof}

\paragraph{Proof of~\Cref{thm:total-FN-loss-alpha}}
\begin{proof}
    Recall that the false negative rate in our setting consists of two components, conditioned on whether agents participate. Observing that when an agent does not participate -- i.e. $\mu_0 < \mutau(\alpha)$ -- their probability of passing the statistical test is 0, we can simplify the overall FN rate as follows:
    \begin{align*}
        \text{FN}(\alpha, \I) &= \E_{\mu_0 \sim q}[\fail(\alpha, \mu_0, \nmu(\alpha))| \mu_0 \geq \mu_b] \\
        &= \underbrace{\E_{\mu_0 \sim q}[\fail(\alpha, \mu_0, \nmu(\alpha))| \mu_0 \geq \mu_b, \mu_0 \geq \mutau(\alpha)]P(\mu_0 \geq \mutau(\alpha) | \mu_0 \geq \mu_b)}_{\text{FN}_{\text{particip}}} \\
        & \quad \quad + \underbrace{P[\mu_0 \leq \mutau(\alpha) | \mu_0 \geq \mu_b]}_{\text{FN}_{\text{abstain}}}
    \end{align*}

    Consider first any $\alpha_1 \leq \alphahat$. We can explicitly express each component of the false negative as follows, since we are guaranteed that $\mutau(\alpha_1) \geq \mu_b$:
    \begin{align}
        \fnPart(\alpha_1, \I) &= \frac{1 - Q(\mutau(\alpha_1))}{1 - Q(\mu_b)} \int_{\mutau(\alpha_1)}^{1}{\fail(\alpha_1, \mu_0, \nmu(\alpha))\frac{q(\mu_0)}{1 - Q(\mutau(\alpha_1))} d \mu_0} \\ 
        \fnAbs(\alpha_1, \I) &= \frac{Q(\mutau(\alpha_1)) - Q(\mu_b)}{1 - Q(\mu_b)}
    \end{align}
    
    Let us focus on $\fnPart$. To simplify this, the following result is helpful. For a positive function $g(x)$ and a function $f(x)$ with minimum and maximum values $f_{min}$ and $f_{max}$ over an interval $[a,b]$, the following holds:
    \begin{align*}
        f_{min}\int_{a}^{b}g(x) \leq \int_{a}^{b}g(x)f(x)dx \leq f_{max}\int_{a}^{b}g(x) \implies f_{min} \leq \frac{\int_{a}^{b}g(x)f(x)dx}{\int_{a}^{b}{g(x)}} \leq f_{max}
    \end{align*}
    Since the expression in the middle is between $f_{min}$ and and $f_{max}$, by the intermediate value theorem, there exists an $c \in [a,b]$ such that: $f(c) = \frac{\int_{a}^{b}g(x)f(x)dx}{\int_{a}^{b}{g(x)}}$ which means: there exists a $c$ such that $f(c)\int_{a}^{b}{g(x)} = \int_{a}^{b}{f(x)g(x)}$. This can be interpreted as an integral mean value theorem. Using this, there exists a value $\mu^{1}_c \in [\mutau(\alpha_1), 1]$ such that the $\text{FN}_{particip}(\alpha_1)$ can be expressed as follows:
    \begin{align*}
        \fnPart(\alpha_1, \I) &=
         \frac{1 - Q(\mutau(\alpha_1))}{1 - Q(\mu_b)} \fail(\alpha_1, \mu^{1}_c, n_{\mu^1_c}(\alpha)) \int_{\mutau(\alpha_1)}^{1}{\frac{q(\mu_0)}{1 - Q(\mutau(\alpha_1))} d \mu_0}\\
         &= \frac{1}{1 - Q(\mu_b)} \fail(\alpha_1, \mu^{1}_c, n_{\mu^1_c}(\alpha)) \int_{\mutau(\alpha_1)}^{1}{\frac{q(\mu_0)}{1 - Q(\mutau(\alpha_1))} d \mu_0} \\
         &= \frac{1 - Q(\mutau(\alpha_1))}{1 - Q(\mu_b)} \fail(\alpha_1, \mu^{1}_c, n_{\mu^1_c}(\alpha)) \\
    \end{align*}
    Therefore, we can write:
    \begin{equation*}
        \text{for some $\mu^{1}_c \in [\mutau(\alpha_1), 1]$: \,\,\,}\text{FN}(\alpha_1, \I) = \frac{1 - Q(\mutau(\alpha_1))}{1 - Q(\mu_b)} \fail(\alpha_1, \mu^{1}_c, n_{\mu^1_c}(\alpha)) + \frac{Q(\mutau(\alpha_1)) - Q(\mu_b)}{1 - Q(\mu_b)}
    \end{equation*}
    
    Now consider an $\alpha_2$ such that $\alpha_1 < \alpha_2 \leq \alpha^*$. Note that due to~\Cref{proof-lemma:monotonicity-participation-decision}, $\mutau(\alpha_2) \leq \mutau(\alpha_1)$. We can thus write the $\fnPart$ of this instance as follows:
    \begin{align*}
        &\fnPart(\alpha_2, \I) = \frac{1 - Q(\mutau(\alpha_2))}{1 - Q(\mu_b)} \int_{\mutau(\alpha_2)}^{1}{\fail(\alpha_2, \mu_0, \nmu(\alpha))\frac{q(\mu_0)}{1 - Q(\mutau(\alpha_2))} d \mu_0} \\
        & = \frac{1}{1 - Q(\mu_b)} \left[ \int_{\mutau(\alpha_2)}^{\mutau(\alpha_1)}{\fail(\alpha_2, \mu_0, \nmu(\alpha_2))q(\mu_0) d \mu_0} + \int_{\mutau(\alpha_1)}^{1}{\fail(\alpha_2, \mu_0, \nmu(\alpha_2))q(\mu_0) d \mu_0} \right] \\
        &= \frac{1}{1 - Q(\mu_b)} \left[ \fail(\alpha_2, \mu_c^{2,1}, n_{\mu_c^{2,1}}(\alpha)) \int_{\mutau(\alpha_2)}^{\mutau(\alpha_1)}{q(\mu_0) d \mu_0} + \fail(\alpha_2, \mu_c^{2,2}, n_{\mu_c^{2,2}}(\alpha))\int_{\mutau(\alpha_1)}^{1}{q(\mu_0) d \mu_0} \right] \\
        &= \fail(\alpha_2, \mu_c^{2,1}, n_{\mu_c^{2,1}}(\alpha))\frac{Q(\mutau(\alpha_1)) - Q(\mutau(\alpha_2))}{1 - Q(\mu_b)} + \fail(\alpha_2, \mu_c^{2,2}, n_{\mu_c^{2,2}}(\alpha)) \frac{1 - Q(\mutau(\alpha_1))}{1 - Q(\mu_b)}
    \end{align*}
    where in the third transition, we use the integral mean value theorem as in the previous $\alpha_1$ case. Note that for the integral between $[\mutau(\alpha_1), 1]$ in the $\alpha_2$ setting, we know that for every $\mu_0 \in [\mutau(\alpha_1), 1]$, $\fail(\alpha_2, \mu) < \fail(\alpha_1, \mu)$ since from~\Cref{lemma:passing-prob-increase-in-alpha}, we know that the pass probability increases as alpha increases and this set of agents participated under both $\alpha_1$ and $\alpha_2$. This immediately means that $\fail(\alpha_2, \mu_c^{2,2}, n_{\mu_c^{2,2}}(\alpha)) \leq \fail(\alpha_2, \mu_c^{1}, n_{\mu_c^{2,1}}(\alpha))$. In the $[\mutau(\alpha_2), \mutau(\alpha_1)]$, the failure probability is at most 1. Hence, $\fail(\alpha_2, \mu_c^{2,1}, n_{\mu_c^{2,1}}(\alpha)) \leq 1$. Thus, we can upper bound the FN participation loss at $\alpha_2$ as follows:
    \begin{equation}
        \fnPart(\alpha_2) \leq \frac{Q(\mutau(\alpha_1)) - Q(\mutau(\alpha_2))}{1 - Q(\mu_b)} + \fail(\alpha_1, \mu^{1}_c, n_{\mu^1_c}(\alpha)) \frac{1 - Q(\mutau(\alpha_1))}{1 - Q(\mu_b)}
    \end{equation}
    We now express the abstain loss for $\alpha_2$ as follows:
    \begin{gather*}
        \fnAbs(\alpha_2) = \frac{Q(\mutau(\alpha_2)) - Q(\mu_b)}{1 - Q(\mu_b)} = \frac{Q(\mutau(\alpha_1)) - Q(\mu_b)}{1 - Q(\mu_b)} - \frac{Q(\mutau(\alpha_1)) - Q(\mutau(\alpha_2))}{1 - Q(\mu_b)} \\
    \end{gather*}
    Then we have that:
    \begin{align}
        \text{FN}(\alpha_2, \I) &= \fnPart + \fnAbs \\
        &\leq \fail(\alpha_1, \mu^{1}_c, n_{\mu^1_c}(\alpha)) \frac{1 - Q(\mutau(\alpha_1))}{1 - Q(\mu_b)} + \frac{Q(\mutau(\alpha_1)) - Q(\mu_b)}{1 - Q(\mu_b)} = \text{FN}(\alpha_1, \I)
    \end{align}

    Lastly, for any $\alpha \geq \alphahat$, we know from~\Cref{prop:fn-decrease-in-alpha-greater-alpha-star} that $\fnPart(\alpha, I) = 0$ and from~\Cref{prop:fnabs_monotone} that $\fnAbs(\alpha, \I) = 0$.
\end{proof}
\newpage
\section{Plot for Vaccine Drugs}\label{appendix:experiments}
 
\begin{figure}[h]
\centering
\label{fig:br_region}
    \begin{minipage}{0.7\textwidth}
        \caption{$\alphahat$ vs Revenue - Vaccines}
        \includegraphics[width=0.98\linewidth]{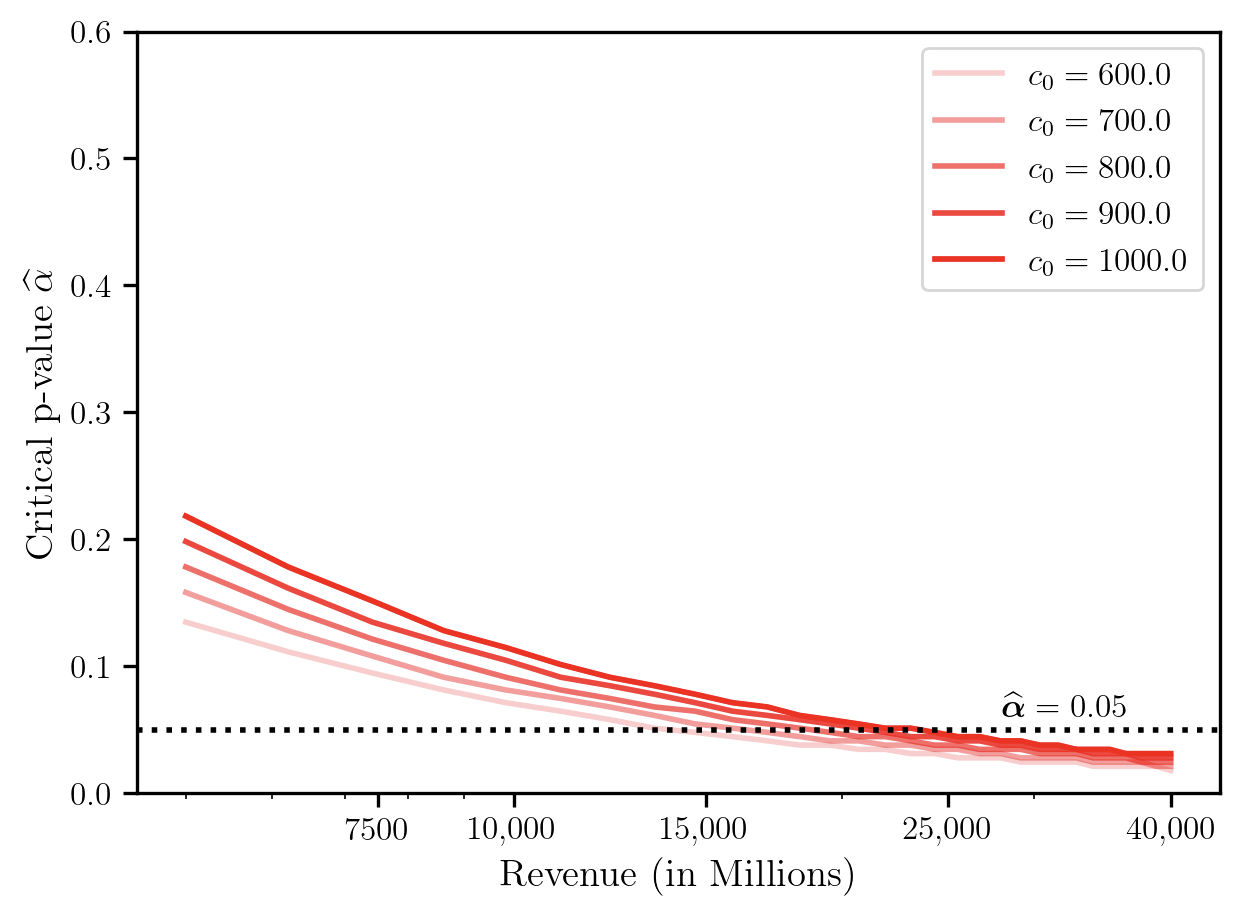}
        \label{fig:oncology-alpha*}
    \end{minipage}%
\end{figure}

\end{document}